%% file: main.tex
\documentclass{article}

\usepackage{microtype}
\usepackage{graphicx}
\usepackage{subcaption}
\usepackage{booktabs}
\usepackage{hyperref}

\usepackage[accepted]{icml2026}

\usepackage{amsmath}
\usepackage{amssymb}
\usepackage{mathtools}
\usepackage{amsthm}

\usepackage{bbm}
\usepackage{consistency_commands}

\theoremstyle{plain}
\newtheorem{theorem}{Theorem}[section]
\newtheorem{lemma}[theorem]{Lemma}

\theoremstyle{definition}
\newtheorem{definition}[theorem]{Definition}

\theoremstyle{remark}
\newtheorem{remark}[theorem]{Remark}

\icmltitlerunning{Realizable Bayes-Consistency for General Metric Losses}

\begin{document}

\twocolumn[
\icmltitle{Realizable Bayes-Consistency for General Metric Losses}

\begin{icmlauthorlist}
  \icmlauthor{Dan Tsir Cohen}{bgu}
  \icmlauthor{Steve Hanneke}{purdue}
  \icmlauthor{Aryeh Kontorovich}{bgu}
\end{icmlauthorlist}

\icmlaffiliation{bgu}{Ben-Gurion University of the Negev}
\icmlaffiliation{purdue}{Purdue University, USA}

\icmlcorrespondingauthor{Dan Tsir Cohen}{dantsir@post.bgu.ac.il}

\icmlkeywords{regression; metric space; Bayes-consistency; learning theory}

\vskip 0.3in
]

\printAffiliationsAndNotice{}

\input{realizable_consistency_abstract}

\input{realizable_consistency_body}

\section*{Impact Statement}
This paper presents foundational theoretical work on the statistical limits of learning with general metric losses. Our contributions are mathematical characterizations of when distribution-free Bayes-consistency is possible, expressed via combinatorial tree obstructions; they have no direct societal applications beyond advancing the broader field of machine learning theory. We do not foresee specific consequences that must be highlighted here.

\bibliography{realizable_consistency}
\bibliographystyle{icml2026}

\input{realizable_consistency_appendix}

\end{document}

%% file: realizable_consistency_abstract.tex
\begin{abstract}
We study strong universal Bayes-consistency in the realizable setting for learning with general metric losses, extending classical characterizations beyond $0$-$1$ classification \citep{bousquet_theory_2021, hanneke2021universalbayesconsistencymetric} and real-valued regression \citep{attias_universal_2024}. 
Given an instance space $(\mathcal X,\rho)$, a label space $(\mathcal Y,\ell)$ with possibly unbounded loss, and a hypothesis class $\mathcal H \subseteq \mathcal Y^{\mathcal X}$, we resolve the realizable case of an open problem presented in \citet{pmlr-v178-cohen22a}.
Specifically, we find the necessary and sufficient conditions on the hypothesis class $\mathcal H$ under which there exists a distribution-free learning rule whose risk converges almost surely to the best-in-class risk (which is zero) for every realizable data-generating distribution. Our main contribution is this sharp characterization in terms of a combinatorial obstruction: Similarly to \citet{attias2024optimallearnersrealizableregression}, we introduce the notion of an infinite non-decreasing $(\gamma_k)$-Littlestone tree, where $\gamma_k \to \infty$.
% : each node is labeled by an instance, each edge corresponds to a pair of labels separated by at least $\gamma_k$ in loss, and every finite root-to-leaf path is exactly realizable by some hypothesis in $\mathcal H$. 
This extends the Littlestone tree structure used in \citet{bousquet_theory_2021} to the metric loss setting.
% We show that the existence of such a tree is equivalent to the impossibility of strong universal Bayes-consistency.

% These results provide a complete characterization of strong universal Bayes-consistency for general metric losses in the realizable case, resolving an open question and unifying combinatorial, game-theoretic, and statistical perspectives on learnability beyond bounded-loss frameworks \citep{bousquet_theory_2021,brukhim_characterization_2022}.
\end{abstract}

%% file: realizable_consistency_body.tex
\section{Introduction}

Many learning problems are naturally expressed with a \emph{metric loss} on the
label space: one predicts a label $y\in\Y$ and is penalized by the distance
$\ell(\hat y,y)$.  This encompasses multiclass prediction (discrete metrics),
regression (absolute loss), structured outputs with edit-type losses, and
general cost-sensitive prediction.  In such settings $\Y$ is typically infinite
and $\ell$ may be unbounded.  While unbounded losses are standard in regression,
they are far less understood in the broader metric-loss setting from the most
basic distribution-free perspective: \emph{when is strong universal consistency
possible at all?}

\paragraph{Realizable strong universal Bayes-consistency.}
Fix an instance space $(\X,\rho)$, a label space $(\Y,\ell)$ (where $\rho$ and $\ell$ are metrics), and a hypothesis class $\H\subseteq \Y^\X$. $\ell$ is the (possibly unbounded) loss function.
Given an i.i.d.\ sample $(X_i,Y_i)_{i\in[n]}\sim\bmu^n$ from an unknown distribution
$\bmu$ on $\X\times\Y$, a learner outputs a predictor $f_n:\X\to\Y$ and incurs
risk
\[
  \risk_{\bmu}(f_n)\ :=\ \E_{(X,Y)\sim\bmu}\,\ell(f_n(X),Y).
\]
We focus on the \emph{realizable} setting: $\bmu$ is realizable by $\H$ if
$\inf_{h\in\H}\risk_{\bmu}(h)=0$.%
\footnote{Under the compact-parameterization assumption used in our main
results (compact~$\Theta$, $h$ continuous in~$\theta$, Lemma~\ref{lem:bridging}), realizability is
equivalent to the existence of $f^\star\in\H$ with $Y=f^\star(X)$ almost
surely (strict realizability), since the infimum over the compact image of
$\Theta$ is attained.  We use this equivalence freely in proofs.
See Appendix~\ref{app:realizability-notions} for further discussion.}
We ask for a \emph{distribution-free} learning rule whose risk converges to $0$
\emph{almost surely} for \emph{every} realizable $\bmu$.
This is the natural strong-law analogue of universal consistency.  In bounded
loss settings, realizability is already a powerful structural assumption and
often collapses the distinction between ``risk goes to $0$'' and ``errors go to
$0$.''  For unbounded metric losses, this intuition fails in a very concrete
way: a learner may make mistakes on events whose probability decays with $n$,
yet the \emph{scale} of those mistakes can grow so quickly that the risk does
not vanish (and can even be infinite).  Understanding when such catastrophic
rare-event failures are \emph{avoidable uniformly over all realizable
distributions} is the crux of the problem.

\paragraph{Why unbounded metric losses are genuinely harder.}
With any bounded loss,
every mistake costs at most a constant, so controlling the
probability of error is essentially synonymous with controlling the risk.  With
general metric losses, a learner can be forced to \emph{guess between two
labels that are extremely far apart} on regions of the instance space that are
rare but not ignorable.  Realizability alone does not preclude this: an
adversary can ``hide'' an infinite sequence of increasingly separated label
choices behind a rapidly decaying probability mass.  This phenomenon is
absent in the bounded-loss theory and is not captured by purely
distributional finiteness conditions (e.g., candidate assumptions such as $R^\star<\infty$);
indeed, in Section~\ref{sec:counterexample}
we give an explicit counterexample
showing that such finiteness-type conditions do not suffice to guarantee any
distribution-free consistent learner in the realizable unbounded-loss regime.

\paragraph{Main result (informal; refer to Section~\ref{sec:main-characterization}).}
Our contribution is a sharp hypothesis-class characterization of when
realizable strong universal Bayes-consistency is possible for general metric
losses.  The characterization is in terms of a single combinatorial
obstruction: an \emph{unbounded-gap} Littlestone tree.

Informally, an infinite non-decreasing $(\gamma_k)$-Littlestone tree is a full
binary tree of instances whose depth-$k$ nodes admit two outgoing labels at
metric distance at least $\gamma_k$, where $\gamma_k\uparrow\infty$, and such
that every finite root-to-leaf labeling pattern is realizable by some
$h\in\H$ (Definition~\ref{def:monotone-gamma-littlestone}).
Under a mild compact-parameterization assumption
(Lemma~\ref{lem:bridging}), this finite-prefix condition automatically
ensures that every \emph{infinite} path is realized by a single hypothesis.
In particular, the counterexample of Section~\ref{sec:counterexample}
naturally satisfies this assumption, demonstrating that unlearnability
remains possible even under it
(see Appendix~\ref{app:compact-unlearnability} for details).

\medskip
\noindent\emph{For metric losses (possibly unbounded), realizable strong universal
Bayes-consistency is possible for $\H$ if and only if $\H$ does \textbf{not}
contain an infinite non-decreasing $(\gamma_k)$-Littlestone tree with
$\gamma_k\to\infty$.}
Moreover:
\begin{itemize}
  \item (\textbf{Lower bound.}) If such a tree exists, then for \emph{every}
  learning algorithm $\A$ there is a realizable distribution $\bmu$ for which
  the learner suffers catastrophic rare-event errors; in fact,
  $\risk_{\bmu}(\A(S_n))=\infty$ almost surely for every $n$
  (Theorem~\ref{thm:lower-bound}).
  \item (\textbf{Upper bound.}) If no such tree exists, then there is an
  explicit distribution-free learning rule whose risk converges to $0$ almost
  surely for every realizable $\bmu$ (Theorem~\ref{thm:realizable-upper}).
\end{itemize}

\paragraph{Proof architecture.}
The lower bound converts an unbounded-gap tree into a realizable distribution
on a random root-to-leaf path: at depths the sample misses (which exist for
every finite~$n$), the learner is forced to guess between two labels separated
by $\gamma_k$, and if $\gamma_k$ grows fast enough the risk diverges almost surely.
This is a metric-loss analogue of the classical adversarial role of Littlestone
trees, but with the new ingredient that the \emph{scale} of the ambiguity
diverges.
The upper bound takes a complementary game-theoretic route.  Absence of an
infinite unbounded-gap tree implies the learner wins an infinite Gale-Stewart
game over realizable adversarial extensions, in the sense of
\citet{bousquet_theory_2021}.  Simulating a winning strategy on the i.i.d.\
sample stabilizes a.s.\ after finitely many advances, yielding for each~$x$ a
bounded-diameter set of admissible labels that contains the true label.  A
data-dependent countable partition of $\X$ then reduces the problem to
countably many bounded-loss subproblems handled by existing metric-loss
learners.

\paragraph{Contributions.}
In summary, this paper provides:
\begin{itemize}
  \item a \textbf{complete hypothesis-class characterization} of realizable strong
  universal Bayes-consistency for general metric losses, via the 
  infinite unbounded-gap Littlestone tree obstruction (Theorem~\ref{thm:main-characterization});
  \item a \textbf{constructive learning rule} in the obstruction-free regime, proven to be strongly universally Bayes-consistent in the realizable setting (Theorem~\ref{thm:realizable-upper});
  \item a \textbf{separation} demonstrating that natural finiteness-type
  conditions (e.g.\ candidates based on $R^\star<\infty$) do not capture the
  correct characterization in the unbounded metric-loss realizable setting, resolving the realizable case of the open problem posed by \citet{pmlr-v178-cohen22a}.
\end{itemize}
Concrete examples of hypothesis classes that do and do not
admit an unbounded-gap tree, illustrating the practical scope of the
characterization, are presented in Appendix~\ref{app:practical-significance}.
Extending the characterization to the \emph{agnostic} setting,
where $\bmu$ need not be realizable, is an important direction for future work;
we discuss the barriers to such an extension in
Appendix~\ref{app:agnostic-discussion}.

\paragraph{Organization.}
Section~\ref{sec:rel-work} discusses related work, with emphasis on universal learning and
tree/game characterizations \citep{bousquet_theory_2021,hanneke_universal_2023,attias_universal_2024},
metric-loss learning algorithms \citep{pmlr-v178-cohen22a}, and combinatorial
characterizations of learnability \citep{brukhim_characterization_2022}.
Section~\ref{sec:counterexample} presents a counterexample showing that
finiteness-type conditions alone do not guarantee distribution-free
consistency for unbounded metric losses.
Section~\ref{sec:main-characterization} presents the main characterization and its proof, which consists of two parts:
Section~\ref{sec:realizable-lower} proves the realizable lower bound:
the existence of an infinite unbounded-gap Littlestone tree implies that no
learner can be consistent (Theorem~\ref{thm:lower-bound}).
Section~\ref{sec:realizable-upper} proves the matching upper bound: in the
absence of such a tree, we construct a strongly universally Bayes-consistent
learner (Theorem~\ref{thm:realizable-upper}).

\section{Related work}
\label{sec:rel-work}

\paragraph{Universal learning and game-theoretic characterizations.} 
Our work builds on the recent program of understanding \emph{universal} (i.e.,
distribution-free) learnability through the lens of infinite games and
combinatorial trees.  In the realizable $0$-$1$ setting, \citet{bousquet_theory_2021}
introduced the universal learning model (allowing distribution-dependent
constants) and developed a sharp theory of achievable rates via Gale-Stewart
games and tree obstructions, highlighting the role of infinite Littlestone-type
structures and measurability of winning strategies.  This framework was
extended to multiclass prediction by \citet{hanneke_universal_2023}, who
identified additional tree-based obstructions governing the universal rate
landscape beyond the binary setting.  More recently, \citet{attias2024optimallearnersrealizableregression}
studied realizable regression in both PAC and online learning settings (notably
for cut-off and absolute losses), providing characterizations of learnability via
multiple combinatorial dimensions: scaled Graph dimension for ERM learnability,
scaled One-Inclusion-Graph (OIG) dimension for general PAC learnability, and an
online dimension based on \emph{scaled} Littlestone trees (where the label
separation is controlled by a resolution parameter or a scale sequence) that
characterizes optimal cumulative loss in online learning (resolving an open
question from prior work).  The present paper continues this line
in a different direction: rather than focusing on rates under bounded losses, we
study \emph{strong} universal Bayes-consistency under \emph{general metric
losses}, including unbounded ones, and provide a necessary-and-sufficient
characterization in terms of an \emph{unbounded-gap} Littlestone obstruction.

\paragraph{Learning with metric losses and unbounded risk.}
Learning with general metric losses has recently attracted attention as a
common generalization of multiclass classification and real-valued regression.
Most directly related is the MedNet algorithm of \citet{pmlr-v178-cohen22a},
which gives a constructive strong universal
Bayes-consistent learner in the \emph{agnostic} setting under separability
assumptions and a \emph{bounded-in-expectation} (BIE) condition on $(\Y,\ell)$.
That work extends the ``optimistically universal'' nearest-neighbor paradigm
initiated for multiclass $0$-$1$ learning on metric instance spaces (e.g.,
OptiNet and follow-ups \citealp{hanneke2021universalbayesconsistencymetric}) to general metric label spaces, and emphasizes that
na\"ive extensions of classical metric-space methods can fail under general
loss geometries.  Our results are complementary in two ways.  First, we focus
on the \emph{realizable} setting and ask when strong universal Bayes-consistency
is possible \emph{as a function of the hypothesis class} $\H$, rather than
imposing integrability conditions on $\Y$ or distributional conditions on
$\bmu$.  Second, the lower-bound construction underlying our characterization
clarifies why unbounded losses fundamentally change the picture: even when the
Bayes optimal risk is $0$, rare mispredictions at increasing loss scales can
preclude strong universal Bayes-consistency.  In particular, our counterexample
shows that finiteness-type conditions such as $R^\ast<\infty$, suggested as a
candidate in \citet{pmlr-v178-cohen22a}, are not sufficient in general without
further structural assumptions.

\paragraph{Metric-space learning, Lipschitz methods, and doubling structure.}
A parallel thread studies learning and generalization when instances are
endowed with a metric and algorithms must leverage only distance information.
A prominent approach models classifiers/regressors as Lipschitz functions and
controls complexity via metric notions such as (doubling) dimension and
fat-shattering.  For example, Gottlieb and Krauthgamer and coauthors developed
efficient large-margin classification algorithms for metric data whose runtime
and guarantees depend on the doubling dimension and approximate proximity
search/Lipschitz extension primitives (e.g., \citealp{DBLP:journals/tit/GottliebKK14+colt};
see also proximity data structures in nearly doubling metrics
\citealp{GK-13}).  These works provide algorithmic and
finite-sample perspectives tailored to geometric regimes (bounded diameter,
doubling, or margin assumptions).  Our setting is orthogonal: we do not assume
low-dimensional geometry or bounded diameter globally, and the obstruction we
identify is \emph{combinatorial} (tree-based) rather than geometric.  That said,
our \emph{upper-bound} construction explicitly exploits bounded-diameter
subproblems (obtained via a data-driven partition of $\X$) and can therefore be
viewed as a reduction that makes it possible to plug in existing bounded-loss
or bounded-range metric learners (including MedNet as used here) locally.

\paragraph{Connections to classical combinatorial dimensions.}
Tree-based parameters are ubiquitous across learning models.  In the uniform/PAC
theory, VC dimension (binary) and its multiclass extensions (e.g., Natarajan and
related dimensions) characterize learnability, with sharp recent results such as
the DS-dimension characterization of multiclass PAC learnability
\citep{brukhim_characterization_2022}.  In adversarial online learning, the
Littlestone dimension and its multiclass generalizations govern mistake bounds
and regret.  The universal-learning line \citep{bousquet_theory_2021,hanneke_universal_2023,attias_universal_2024}
reveals that \emph{infinite} tree obstructions (and refinements thereof) also
control distribution-dependent behavior beyond uniform convergence.
Our contribution is to show that, for general \emph{metric} losses in the
\emph{realizable} setting, the decisive obstruction is the existence of an
infinite Littlestone-type tree whose label gaps can be forced to grow without
bound along depth.  This pinpoints the precise way in which unbounded loss
magnitudes interact with realizability to make strong universal
Bayes-consistency either possible or impossible.

\section{\(R^* < \infty\) is 
insufficient
}\label{sec:counterexample}

In \citet{pmlr-v178-cohen22a}, a \UBcst\ learner for unbounded losses (\medoidnet) is achieved only when the label space \(\Y\) is BIE. The label space \(\Y\) is {\em bounded in expectation} (BIE) if
\(\E_{(X,Y)\sim\bmu}\ell(y_0,Y)<\infty\) for some \(y_0\in \Y\).

The BIE condition is a natural metric analogue of the real-valued moment assumption \(\E|Y|<\infty\). It was shown to be sufficient for \Bcsy\ of \medoidnet, and it motivates the broader question of what, precisely, is required for strong Bayes-consistency when the loss is unbounded.
Accordingly, \citet{pmlr-v178-cohen22a} posed the open problem of giving a necessary and sufficient condition on \((\X,\rho)\), \((\Y,\ell)\), and \(\bmu\) under which \medoidnet - or some other learning algorithm - is strongly Bayes-consistent. A natural and optimistic candidate was the finiteness of the Bayes risk, \(R^*<\infty\). In this section we show that this condition is not sufficient: without further assumptions, no \UBcst\ learner can exist.

We will show that for \textit{any} learning algorithm $\A$, there \textit{exists} a distribution $\mu$ such that the risk of the learned hypothesis $h_n := \A(S_n)$ is infinite almost surely for every $n$:
\[
\risk_{\mu}(\A(S_n)) = \infty
\qquad\text{almost surely for all }n.
\]
Meaning that without further assumptions - no learner is \UBcst.
Let \(\X\) be the open interval \((0, 1)\).
Let \(\Y = \N_0:=\{0,1,2,\dots\}\), with the standard \(\ell_1\) loss.
Define for any \(k \in \N\) : \[ I_k := \left(\frac{1}{2^{k}}, \frac{1}{2^{k-1}}\right). \]
Thus \(\X = \bigcup_{k \in \N} I_k\).
We look at the realizable case, with the uniform distribution \(\mu\) over \(\X\),
 realized by the family of functions:
\[  \H := \left\{f : \forall k \in \mathbb{N}, \forall x \in I_k : f(x) \in \left\{0, 2^{2k + 1}\right\}\right\},\]
thus defining the family of possible distributions on \((\X, \Y)\).
We now define a \emph{realizable} distribution via an explicit product (coin-flip)
construction. Let \(B=(B_k)_{k\ge 1}\) be i.i.d.\ fair bits,
\(B_k\sim\mathrm{Ber}(1/2)\), and define \(f_B\in\H\) by
\[
f_B(x):=
\begin{cases}
0 & x\in I_k \ \text{and}\ B_k=0,\\
2^{2k+1} & x\in I_k \ \text{and}\ B_k=1.
\end{cases}
\]
For each fixed \(B\), let \(\bmu_B\) be the distribution on \(\X\times\Y\) given by
\(X\sim \mathrm{Unif}(0,1)\) and \(Y=f_B(X)\). Then \(\bmu_B\) is realizable by
\(f_B\), hence \(R^*(\bmu_B)=0<\infty\).
Thus, we observe this as a problem where the learner (learning algorithm) sees an i.i.d.\ sample
$S_n=\{(X_1, Y_1), \dots, (X_n, Y_n)\} \sim \bmu_B^n$ and aims to output a predictor
$h_n := \A(S_n)$ so as to minimize the risk:
$$\risk_{\bmu_B}(h_n):=\E_{(X,Y)\sim\bmu_B}\ell(h_n(X),Y).$$
The key point is that, conditional on any realized sample $S_n$, infinitely
many intervals $I_k$ remain unobserved, so $B_k$ stays a fresh fair coin on each
such interval. On $I_k$, at least one of the two possible labels
$\{0,2^{2k+1}\}$ must incur loss at least $2^k$ (by the triangle inequality and
integration over $|I_k|=2^{-k}$), so with conditional probability at least
$1/2$ the interval contributes $\ge 2^k$ to the risk. Independence across
unseen $k$ and Borel-Cantelli then force infinitely many such contributions,
implying $\risk_{\bmu_B}(h_n)=\infty$ almost surely. The full derivation is in
Appendix~\ref{app:counterexample-infinite-risk}.

\section{Main result: characterization of $\H$}\label{sec:main-characterization}

We now state the main theorem, which characterizes when strong universal
Bayes-consistency is achievable in the realizable setting under general (possibly
unbounded) metric losses. The characterization is in terms of a single
combinatorial obstruction: an \emph{unbounded-gap} Littlestone tree.

\begin{definition}[non-decreasing-$\gamma_k$-Littlestone tree]\label{def:monotone-gamma-littlestone}
Fix a non-decreasing sequence $(\gamma_k)_{k\ge1}$ with $\gamma_k\to\infty$.
A \emph{non-decreasing-$\gamma_k$-Littlestone tree for $\H$}
(a metric-loss variant of the classical Littlestone
tree~\citep{littlestone_learning_1988}; see
also~\citet[Definition~1.7]{bousquet_theory_2021})
is a rooted full
binary tree whose internal nodes at depth $k$ are labeled by instances
$x_{k,i}\in\X$, and whose two outgoing edges from each such node are labeled by
$y_{k,i,1},y_{k,i,2}\in\Y$ satisfying $\ell(y_{k,i,1},y_{k,i,2})\ge\gamma_k$,
with the property that every \emph{finite} root-to-leaf path is realizable by
some $h\in\H$ (i.e., $h$ matches the edge labels along that finite path).
\end{definition}

\begin{definition}[realizable non-decreasing-$\gamma_k$-Littlestone tree]\label{def:realizable-monotone-gamma-littlestone}
An \emph{infinite non-decreasing-$\gamma_k$-Littlestone tree} is
a tree of infinite depth satisfying the conditions of
Definition~\ref{def:monotone-gamma-littlestone} at every finite depth $k$.
Such an infinite tree is \emph{realizable} if,
in addition to finite-prefix realizability, every infinite root-to-leaf path
has its entire labeling realized by a \emph{single} hypothesis in~$\H$ (i.e., for
each infinite path there exists $h\in\H$ matching all edge labels along that
path).
\end{definition}

\begin{lemma}[Bridging lemma: finite-prefix realizability implies
  infinite-path realizability]\label{lem:bridging}
Suppose $\H$ satisfies the measurability condition of
\citet[Definition~3.3]{bousquet_theory_arxiv} with a \textbf{compact} parameter space
$\Theta$ and a function $h\colon\Theta\times\X\to\Y$ that is \textbf{continuous in~$\theta$} (i.e., $\theta\mapsto h(\theta,x)$ is continuous for each fixed $x\in\X$).
Then every infinite non-decreasing-$\gamma_k$-Littlestone tree
(Definition~\ref{def:monotone-gamma-littlestone}) is automatically realizable
(Definition~\ref{def:realizable-monotone-gamma-littlestone}).
\end{lemma}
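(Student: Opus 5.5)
Fix an infinite path in the tree, i.e., a sequence of instances $x_1,x_2,\dots$ (the nodes visited) together with the chosen edge labels $y_1,y_2,\dots$. The plan is a compactness argument (a finite-intersection or nested-closed-sets argument) on the parameter space $\Theta$.

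For each $k$, define
\[
  F_k := \{\theta\in\Theta : h(\theta,x_j)=y_j \text{ for all } j\le k\}.
\]
Finite-prefix realizability (Definition~\ref{def:monotone-gamma-littlestone}) says that the depth-$k$ prefix is matched by some hypothesis in $\H$. Under the measurability condition of \citet[Definition~3.3]{bousquet_theory_arxiv}, every hypothesis in $\H$ has the form $h(\theta,\cdot)$ for some $\theta\in\Theta$. Hence $F_k\neq\emptyset$ for every $k$.

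Next, each $F_k$ is closed. Indeed, $F_k=\bigcap_{j\le k}\{\theta:h(\theta,x_j)=y_j\}$ is the finite intersection of preimages of the singleton $\{y_j\}$ under the continuous maps $\theta\mapsto h(\theta,x_j)$. Singletons are closed in the metric space $(\Y,\ell)$, so these preimages are closed.

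The sets are also nested, $F_{k+1}\subseteq F_k$, since $F_{k+1}$ imposes one more constraint than $F_k$. A decreasing sequence of nonempty closed subsets of the compact space $\Theta$ has the finite intersection property, so $\bigcap_k F_k\neq\emptyset$. Any $\theta^\star$ in this intersection satisfies $h(\theta^\star,x_j)=y_j$ for every $j$. Thus the single hypothesis $h(\theta^\star,\cdot)\in\H$ realizes the entire infinite path, as Definition~\ref{def:realizable-monotone-gamma-littlestone} requires. Since the path was arbitrary, the tree is realizable.

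The main obstacle is matching the parameterization to $\H$ precisely. We need both (i) every $h\in\H$ equals $h(\theta,\cdot)$ for some $\theta$, so that each $F_k$ is nonempty, and (ii) every $h(\theta,\cdot)$ lies in $\H$, so that $\theta^\star$ yields a member of $\H$. I would read the cited measurability condition as asserting $\H=\{h(\theta,\cdot):\theta\in\Theta\}$, which gives both. If only the inclusion $\H\subseteq\{h(\theta,\cdot)\}$ were given, the conclusion would need to be stated for the parameterized class instead.

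The gap sizes $\gamma_k$ play no role in this argument. Only the equalities along the path matter.
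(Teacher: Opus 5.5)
Your proposal is correct and matches the paper's proof essentially step for step. Both arguments take the nested sets of parameters matching the first $k$ edge labels along a fixed infinite path, get nonemptiness from finite-prefix realizability, get closedness from continuity in $\theta$ (preimages of closed singletons), and conclude nonempty intersection from compactness of $\Theta$. Your reading of the measurability condition as $\mathcal H=\{h(\theta,\cdot):\theta\in\Theta\}$ is exactly how the paper states it, so both inclusions you flagged as needed are available.
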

\begin{proof}
The proof follows from the finite-intersection property
of compact spaces; see Appendix~\ref{app:compact-parameterization}.
\end{proof}

\begin{remark}\label{rem:compact-param}
Lemma~\ref{lem:bridging} requires the compact-parameterization
assumption (compact~$\Theta$, $h$ continuous in~$\theta$); we discuss in
Appendix~\ref{app:compact-parameterization} why this assumption is
necessary in general, give a concrete example where the two tree
notions differ without it, and argue that the assumption is mild and
covers virtually all settings of practical interest.
\end{remark}

\begin{theorem}[Main characterization]\label{thm:main-characterization}
Assume $(\X,\rho)$ and $(\Y,\ell)$ are Polish, and $\H$ satisfies the
measurability condition of \citet[Definition~3.3]{bousquet_theory_arxiv}
with compact $\Theta$ and $h$ continuous in~$\theta$
(cf.\ Lemma~\ref{lem:bridging}).
Then the following are equivalent:
\begin{enumerate}
  \item There exists a (distribution-free) learning rule $\A$ such that for
  every realizable distribution $\bmu$ on $\X\times\Y$,
  the learned predictors $h_n:=\A(S_n)$ satisfy
  \[
    \risk_{\bmu}(h_n)\to 0 \qquad \text{almost surely as } n\to\infty.
  \]
  \item $\H$ admits no infinite non-decreasing-$(\gamma_k)$-Littlestone tree (for
  any non-decreasing gap sequence $(\gamma_k)$ with $\gamma_k\to\infty$).
\end{enumerate}

\end{theorem}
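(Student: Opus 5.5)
We prove the two implications separately; $(1)\Rightarrow(2)$ is the contrapositive of the lower bound and $(2)\Rightarrow(1)$ is the upper bound.

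\emph{Lower bound ($\neg(2)\Rightarrow\neg(1)$).} Suppose $\H$ admits an infinite non-decreasing-$(\gamma_k)$-Littlestone tree. By Lemma~\ref{lem:bridging} the tree is realizable, so every infinite path is matched by a single $h\in\H$.

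First we speed up the gaps. Pass to a subsequence of depths $k_1<k_2<\dots$ with $\gamma_{k_j}\ge 2^{2j+1}$. Since $\gamma_k\to\infty$ such a subsequence exists. We then restrict attention to the nodes at depths $k_j$; at the intermediate depths we fix, say, the first child. This gives a new tree whose depth-$j$ gap is at least $2^{2j+1}$ and which is still realizable.

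Next we mimic the construction of Section~\ref{sec:counterexample}. Draw i.i.d.\ fair bits $B=(B_j)$; they determine a random infinite path with nodes $x_j$ and a realizing hypothesis $h_B\in\H$. Let $\mu_B$ have marginal $\Pr[X=x_j]=2^{-j}$ and $Y=h_B(X)$. The tree can repeat instances, but the labels are consistent along a realizable path, so this is harmless: if $x_j=x_{j'}$ then consistency forces the two bits to be coupled, and we bound the relevant mass from below instead of computing it exactly.

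Condition on $S_n$. Infinitely many depths $j$ are unsampled, and for each such $j$ the bit $B_j$ is still a fair coin independent of $S_n$. By the triangle inequality, whatever the prediction $\A(S_n)(x_j)$ is, with conditional probability at least $1/2$ it incurs loss at least $\gamma_{k_j}/2$. That depth then contributes at least $2^{-j}\cdot 2^{2j}=2^{j}$ to the risk. By Borel--Cantelli (second lemma, using independence of the unseen bits), infinitely many depths contribute, so the risk is infinite almost surely for every $n$. Averaging over $B$ produces a fixed realizable $\mu_B$ on which $\A$ fails, so (1) fails.

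\emph{Upper bound ($(2)\Rightarrow(1)$).} We follow \citet{bousquet_theory_2021}.

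1. Define a Gale--Stewart game. At round $k$ the adversary plays $x_k$ and two labels at distance at least $\gamma_k$ such that the history remains realizable, and the learner picks one of the two. To obtain a single game rather than one per sequence $(\gamma_k)$, we use the following reformulation. The learner plays a scale $r_k$. The adversary must then reveal a point $x_k$ and a label $y_k$ consistent with $\H$ given the history, with $y_k$ outside the learner's current predicted ball of radius $r_k$. The adversary wins if it can continue forever with $r_k\to\infty$.

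2. Show that the absence of a tree is equivalent to the learner having a winning strategy. If the adversary had a winning strategy, we could unfold it into an infinite tree with gaps tending to infinity. Making the gaps non-decreasing requires passing to subsequences, and finite-prefix realizability is preserved under this pruning. Gale--Stewart determinacy holds because the adversary's winning set is open (the learner wins at a finite time). Measurability of the winning strategy follows from the measurability condition of \citet[Definition~3.3]{bousquet_theory_arxiv}, as in \citet{bousquet_theory_arxiv}.

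3. Run the winning strategy on the i.i.d.\ sample, realized by $f^\star$. The strategy advances only finitely often almost surely. The argument is the usual one: the probability of advancing at time $n$ tends to zero, which we establish by a batch/ghost-sample argument. After the strategy stabilizes, it yields a measurable map $x\mapsto$ a set $A(x)\ni f^\star(x)$ of finite diameter.

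4. Partition $\X$ countably into cells $C_m=\{x:\operatorname{diam}A(x)\in[m,m+1)\}$. On each cell the loss is effectively bounded by $m+1$ relative to a reference label chosen in $A(x)$. Run MedNet \citep{pmlr-v178-cohen22a}, which is consistent for bounded losses, separately on each cell.

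5. Control the tails of the risk. The risk restricted to the union of cells $C_m$ with $m>M$ is bounded by $\E[\operatorname{diam}A(X)\mathbbm{1}\{\operatorname{diam}A(X)>M\}]$. This quantity could be infinite unless we truncate: on cells with small mass we predict a point of $A(x)$, and the resulting error is at most $\operatorname{diam}A(x)$.

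The \textbf{main obstacle} is this integrability: a finite diameter at each $x$ does not give a finite expectation. I would try to fix this by making the game scale adaptively, for instance requiring $r_k$ to grow with a data-dependent budget, or by using a finer partition that balances mass against diameter. Establishing the stabilization in step 3 uniformly across scales is the second delicate point.
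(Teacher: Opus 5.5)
Your lower-bound direction follows the paper's route: level-skipping to speed up the gaps, a coin-flip random branch with summable masses whose products with the gaps are not summable, Borel--Cantelli on fresh coins, then Fubini. Two small corrections there. A bit is independent of $S_n$ only beyond the deepest sampled depth, because a sampled deeper node reveals the earlier bits. Also, instances cannot repeat along a branch of a Littlestone tree, since both children of the later occurrence would have to carry the label already fixed at the earlier one.

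The upper bound, however, has a genuine gap, which you flag yourself and leave open. With sets $A(x)$ of finite but $x$-dependent diameter, nothing controls the tail term $\mathbb{E}[\operatorname{diam}A(X)\mathbbm{1}\{\operatorname{diam}A(X)>M\}]$. Moreover, partitioning by $\operatorname{diam}A(x)$ does not put the labels of a cell into a common bounded region, since your reference label moves with $x$. So the BIE hypothesis needed to invoke MedNet on a cell can fail. Your single-game reformulation in step~1 is also not well posed. The learner chooses $r_k$, so it can keep $r_k$ bounded and the adversary's winning condition $r_k\to\infty$ is never met; this breaks the equivalence claimed in step~2. The reformulation is also unnecessary: fixing one diverging $(\gamma_k)$ loses nothing, by the same level-skipping you use in the lower bound.

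The missing idea is to read the label sets off the game at the \emph{fixed} threshold of the next round, which makes the diameter bound uniform in $x$.
\begin{enumerate}
\item Keep the two-label game for one fixed $(\gamma_k)$ and drive it on the sample. Advance only when $\sigma$ would choose the true label $Y_i$ against some feasible alternative at distance $\ge\gamma_{k+1}$. The history then stays consistent with $f^\star$, every move is legal, and the winning property bounds the number of advances deterministically, so no ghost-sample argument is needed.
\item At the terminal round $K$, let $H_K(x)$ be the feasible labels at $x$ that $\sigma$ never chooses against a feasible alternative at distance $\ge\gamma_{K+1}$. Two labels of $H_K(x)$ farther apart than $\gamma_{K+1}$ would form a legal adversary move, and $\sigma$ must pick one of them. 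Hence $\operatorname{diam}H_K(x)\le\gamma_{K+1}$ for \emph{every} $x$.
\item Since no sample point after stabilization can advance the game, Borel--Cantelli gives $Y\in H_K(X)$ almost surely.
\item Partition $\X$ by location rather than by diameter, using a countable dense $(q_j)\subseteq\Y$ and $j_K(x)=\min\{j:\exists y\in H_K(x),\ \ell(y,q_j)\le\gamma_{K+1}\}$. Each cell's labels then lie in the single ball of radius $2\gamma_{K+1}$ around $q_j$.
\item MedNet restricted to that ball is consistent on the cell, and every cell's risk is at most $4\gamma_{K+1}$. The tail $\sum_{j>J}p_j\cdot 4\gamma_{K+1}$ therefore vanishes, and the integrability obstacle never arises.
\end{enumerate}
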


\begin{proof}
$(1) \implies (2)$: Contrapositive: suppose an infinite
  non-decreasing-$\gamma_k$-Littlestone tree exists
  (Definition~\ref{def:monotone-gamma-littlestone}).  By
  Lemma~\ref{lem:bridging}, the tree is automatically realizable
  (Definition~\ref{def:realizable-monotone-gamma-littlestone}).
  Theorem~\ref{thm:lower-bound} then shows that no learner can be
  consistent.
$(2) \implies (1)$: If no such obstruction exists, then the explicit learner constructed in
  Section~\ref{subsec:realizable-learner} is strongly universally Bayes-consistent
  for every realizable $\bmu$ (Theorem~\ref{thm:realizable-upper}).
\end{proof}

\section{Lower bound: Existence of an infinite tree implies no consistency}\label{sec:realizable-lower}

\begin{theorem}[Lower bound]\label{thm:lower-bound}
    Let $\H$ be a hypothesis class that contains a realizable infinite non-decreasing-$\gamma_k$-Littlestone tree with $\gamma_k \to \infty$.
    Then, for any learning algorithm $\A$, there exists a realizable distribution $\bmu$ such that
    \[ \risk_{\bmu}(\A(S_n)) = \infty \]
    almost surely for all $n$.
    In particular, no learner is consistent.
\end{theorem}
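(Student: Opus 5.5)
The plan generalizes the counterexample of Section~\ref{sec:counterexample}: pass to a subsequence of depths along which the gaps grow fast, put geometrically decaying mass on a random root-to-leaf path, and use the randomness of the path to defeat any fixed algorithm.

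\textbf{Step 1: choose depths and weights.} Since $\gamma_k\to\infty$ and is non-decreasing, I pick depths $k_1<k_2<\cdots$ with $\gamma_{k_j}\ge 4^j$. I then set weights $p_j:=c\,2^{-j}$, with $c$ normalizing $\sum_j p_j$ to $1$. This gives
\[
p_j\gamma_{k_j}\ \ge\ c\,2^{j}\to\infty .
\]

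\textbf{Step 2: random path and distribution.} Draw i.i.d.\ fair bits $B=(B_k)_{k\ge1}$. They determine an infinite root-to-leaf path $x_{1},x_{2},\dots$ with edge labels $y_k:=y_{k,i_k,1+B_k}$. By realizability of the infinite tree (Definition~\ref{def:realizable-monotone-gamma-littlestone}), there is a single $h_B\in\H$ matching all these labels. Define $\bmu_B$ by $X=x_{k_j}$ with probability $p_j$, and $Y=h_B(X)$. Then $\bmu_B$ is realizable, so $R^*=0$.

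The node at depth $k_j$ may coincide, as a point of $\X$, with some other node on the path. This cannot cause a conflict, since $h_B$ assigns a single label to each point. The identity of the node $x_{k_j}$ is determined by $B_1,\dots,B_{k_j-1}$, while its label is determined by the fresh bit $B_{k_j}$.

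\textbf{Step 3: unobserved depths.} For each $n$, the sample $S_n$ reveals at most $n$ of the bits $B_{k_j}$. Let $J_n$ be the (infinite) set of indices $j$ with $x_{k_j}$ not sampled. I need to handle repeated points carefully: I include in $J_n$ only those $j$ for which no sampled point equals $x_{k_j}$ and $j$ exceeds the largest sampled index. For such $j$, conditionally on $S_n$ and on $B_1,\dots,B_{k_j-1}$, the bit $B_{k_j}$ is still a fair coin.

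By the triangle inequality, for any prediction $\hat y=\A(S_n)(x_{k_j})$, at least one of the two candidate labels is at distance at least $\gamma_{k_j}/2$ from $\hat y$. Hence, with conditional probability at least $1/2$, the contribution of depth $k_j$ to the risk is at least
\[
p_j\gamma_{k_j}/2\ \ge\ c\,2^{j-1}.
\]
Processing the depths in increasing order, with the events $E_j=\{\text{loss at } x_{k_j}\ge\gamma_{k_j}/2\}$ for $j$ large, these events each have conditional probability at least $1/2$ given the past. Lévy's conditional Borel--Cantelli lemma then shows that infinitely many occur almost surely. Since each occurrence contributes at least $c\,2^{j-1}$, we get $\risk_{\bmu_B}(\A(S_n))=\infty$ almost surely over $(B,S_n)$.

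\textbf{Step 4: derandomize.} Let $A_n$ be the event that the risk at time $n$ is infinite. I apply Fubini to the joint law over $B$ and the samples. Since $\Pr(A_n)=1$ for every $n$, intersecting over the countably many $n$ gives probability one. Fubini then yields a fixed $B$ for which $\bmu:=\bmu_B$ has $\risk_\bmu(\A(S_n))=\infty$ almost surely for all $n$.

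\textbf{Main obstacle.} The main difficulty is Step 3's independence. The learner's prediction at $x_{k_j}$ depends on $S_n$, and the node's location depends on earlier bits. I will condition on $\sigma(S_n,B_1,\dots,B_{k_j-1})$ and argue that $B_{k_j}$ is independent of this $\sigma$-field whenever $x_{k_j}$ does not appear in the sample. Making this rigorous requires the sampling to be done as ``index $J_i\sim p$ i.i.d., independent of $B$''. This also needs the measurability of $\A$ that is implicit in the paper's setup.
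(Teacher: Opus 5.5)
Your proposal is correct and takes essentially the same route as the paper: a random root-to-leaf path, mass placed on selected depths so that $p\,\gamma$ stays bounded below, fresh fair coins at depths beyond the largest sampled index, a conditional Borel--Cantelli argument, and Fubini with a countable intersection over $n$. The differences are cosmetic. You use geometric weights $c2^{-j}$ with $\gamma_{k_j}\ge 4^j$ where the paper uses $p_m\propto m^{-2}$ with $\gamma_{k_m}\ge m^2$, and your appeal to L\'evy's conditional Borel--Cantelli is slightly cleaner than the paper's independence claim, because it directly accounts for the fact that the location of $x_{k_j}$, and hence which label is ``bad'', depends on earlier unseen bits.
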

\begin{proof}[Proof sketch; full proof in Appendix~\ref{app:lower-bound-proof}.]
The argument generalizes the counterexample of
Section~\ref{sec:counterexample}: we construct a realizable distribution
supported on selected tree nodes where the gap is large, then show every
learner suffers infinite risk via a Borel-Cantelli argument on unseen nodes.

\paragraph{Distribution construction.}
Since $\gamma_k \to \infty$, we can select depths
$k_1 < k_2 < \cdots$ along the tree where $\gamma_{k_m} \ge m^2$.
Assign probability mass $p_m \propto 1/m^2$ to the depth-$k_m$ node
(normalized so that $\sum p_m = 1$).
A random root-to-leaf path is drawn by tossing independent fair coins
$(B_k)_{k \ge 1}$, one per tree level; the coin $B_{k_m}$ selects which of
the two outgoing labels at depth~$k_m$ is the true label.
Because the tree is realizable
(Definition~\ref{def:realizable-monotone-gamma-littlestone}), the resulting
distribution $\bmu_B$ is realizable by some $f_B^\star \in \H$.

\paragraph{Unseen depths act as fair coins.}
Fix any learner $\A$ and sample size~$n$.
A sample $S_n \sim \bmu_B^n$ can touch at most $n$ of the countably many
depths~$k_m$, so all but finitely many remain unobserved.
At each unseen depth~$k_m$, the coin $B_{k_m}$ is independent of~$S_n$,
meaning the learner's prediction $h_n(x_{k_m})$ is chosen with no information
about the true label.
Since the two candidate labels are at distance
$\ge \gamma_{k_m} \ge m^2$, the triangle inequality forces
$\E[\ell(h_n(x_{k_m}),\,Y) \mid S_n] \ge m^2/2$,
regardless of what the learner predicts.

\paragraph{Divergence via Borel-Cantelli.}
More precisely, whichever label the learner's prediction is closer to, the
``bad'' coin outcome (the other label) occurs with conditional probability
$1/2$, independently across unseen depths.
On that event the loss contribution is at least
$p_m \cdot m^2/2 = \Theta(1)$.
By the second Borel-Cantelli lemma, infinitely many such bad events occur
almost surely, so $\risk_{\bmu_B}(h_n) = \infty$ a.s.
A Fubini argument over the random path~$B$ and a countable intersection
over $n \in \N$ then yield, for the fixed learner~$\A$, a single
distribution $\bmu$ with $\risk_{\bmu}(\A(S_n))=\infty$ almost surely
at every sample size.
\end{proof}

\section{Upper bound: absence of an infinite tree implies strong universal consistency}
\label{sec:realizable-upper}

In this section we complete the argument that, under a suitable combinatorial
condition on $\H$, strong universal Bayes-consistent learning is possible in
the realizable case, even when the loss is unbounded.

\paragraph{Proof roadmap.}
The upper bound proceeds in four steps:
(1)~we encode the tree obstruction as a Gale-Stewart game and invoke
the measurable-strategy machinery of \citet{bousquet_theory_2021}
(Section~\ref{subsec:gs-game});
(2)~we show that driving this game on an i.i.d.\ sample causes it to
\emph{stabilize} after finitely many rounds a.s., producing for each~$x$
a set of admissible labels $H_K(x)$ of bounded diameter
(Sections~\ref{subsec:hk-sets}-\ref{subsec:stabilization});
(3)~we partition $\X$ into countably many cells, each with a common
bounded label region (Section~\ref{subsec:partition-reduction});
(4)~on each cell we apply the bounded-range learner \medoidnet\
of \citet{pmlr-v178-cohen22a} and aggregate
(Sections~\ref{subsec:realizable-learner}-\ref{subsec:realizable-consistency}).

\paragraph{Standing measurability assumptions.}
To avoid measure-theoretic pathologies (measurable determinacy / measurable
strategies and measurable selections), we assume that both $(\X,\rho)$ and
$(\Y,\ell)$ are Polish spaces. In particular, $(\Y,\ell)$ is separable, hence
admits a countable dense subset, which we will use to build a countable
partition of~$\X$. In addition, we assume the hypothesis class $\H$ satisfies the measurability
condition of \citet[Definition~3.3]{bousquet_theory_arxiv}:
there exists a Polish space $\Theta$ (the \emph{parameter space}) and a
Borel-measurable \emph{evaluation map}
$h\colon\Theta\times\X\to\Y$, $(\theta,x)\mapsto h(\theta,x)$, such that
$\H = \{h(\theta,\cdot) : \theta\in\Theta\}$.
We strengthen this by requiring $\Theta$ to be \textbf{compact}
and $h$ to be \textbf{continuous in~$\theta$}
(i.e., $\theta\mapsto h(\theta,x)$ is continuous for each fixed $x\in\X$;
cf.\ Lemma~\ref{lem:bridging}, and see
Appendix~\ref{app:compact-parameterization} for a discussion of why this
strengthening is needed and why it is mild), so that the game
in Section~\ref{subsec:gs-game} admits a \emph{universally measurable} winning
strategy for the learner.
Whenever we define set-valued maps or indices using this strategy (e.g.\
$H_k(\cdot)$, $j_k(\cdot)$, and the cells $C_{k,j}$), we tacitly work with
universally measurable versions so that these objects (and the final predictor)
can be taken universally measurable.\footnote{Concretely, this can be ensured by
standard selection results on Polish spaces (e.g.\ Jankov-von Neumann type
selection) applied to the analytic sets arising from the legal-move relation and
the strategy $\sigma$.}

\subsection{A Gale-Stewart game}
\label{subsec:gs-game}

Fix a non-decreasing sequence $(\gamma_k)_{k\ge1}$ with $\gamma_k\to\infty$.
The particular choice of diverging gap schedule is inessential: if $\H$
contains an infinite non-decreasing-gap tree whose gaps tend to $\infty$, then
for any prescribed diverging sequence $(\tilde\gamma_m)_{m\ge1}$ one can obtain
an infinite non-decreasing-$(\tilde\gamma_m)$ tree by passing to a subsequence
of depths along which the original gaps exceed $\tilde\gamma_m$ (``skipping
levels''). Consequently, the absence of an infinite non-decreasing-$(\gamma_k)$
tree for \emph{some} diverging $(\gamma_k)$ is equivalent to the absence of any
infinite unbounded-gap Littlestone tree obstruction.

Now, we define an infinite Gale-Stewart game between an adversary $P_A$ and a
learner $P_L$. At round $k=1,2,\dots$:
\begin{itemize}
  \item $P_A$ chooses a triple $(\xi_k,\eta_{k,1},\eta_{k,2})\in\X\times\Y\times\Y$
        such that $\ell(\eta_{k,1},\eta_{k,2})\ge\gamma_k$ and such that there
        exists an $h\in\H$ consistent with the previous play and satisfying
        $h(\xi_k)\in\{\eta_{k,1},\eta_{k,2}\}$.
  \item $P_L$ chooses one of the two labels, denoted
        $\hat\eta_k\in\{\eta_{k,1},\eta_{k,2}\}$.
\end{itemize}
The learner wins if at some finite round the adversary has no legal move;
otherwise the adversary wins (i.e., if play continues forever).
Recall the definition of a non-decreasing-$\gamma_k$-Littlestone tree from Definition~\ref{def:monotone-gamma-littlestone}. It is immediate that there exists an infinite non-decreasing-$\gamma_k$
Littlestone tree iff $P_A$ has a winning strategy. Hence, if no such infinite
tree exists, then $P_L$ has a winning strategy. Under the Polish assumptions,
we fix a \emph{measurable} winning strategy $\sigma$ for $P_L$. 
The existence of such a measurable strategy is explained
in Appendix~\ref{app:measurable-strategy}, where we verify that the
conditions of \citet[Theorem~B.1]{bousquet_theory_arxiv} are met by our game.

\subsection{History-conditional label sets}
\label{subsec:hk-sets}

\begin{definition}[History-conditional label sets]\label{def:hk-sets}

Let $\tau_k$ denote a finite history arising from a legal play under $\sigma$.
Given such a history and a point $x\in\X$, define the \emph{history-conditional feasible label set}
\[
\begin{aligned}
  V_k(x):=\{y\in\Y:\ &\exists h\in\H\ \\ 
  &\text{consistent with }\tau_k
  \text{ such that } h(x)=y\},
\end{aligned}
\]
and the \emph{history-conditional label set}
\[
\begin{aligned}
  H_k(x)
  := \bigl\{y\in V_k(x):\ &\forall y'\in V_k(x)\ \text{with }\ell(y,y')\ge\gamma_{k+1},\\
  &\sigma\ \text{does not choose }y\ \text{at round }k{+}1\\
  &\text{when presented with }\\
  &(\xi_{k+1},\eta_{k+1,1},\eta_{k+1,2})=(x,y,y')\bigr\}.
\end{aligned}
\]
\end{definition}
Intuitively, $H_k(x)$ is the set of labels at $x$ that the strategy $\sigma$
\emph{never} selects at the next round when paired with any sufficiently far
\emph{feasible} alternative.

\begin{lemma}[Bounded diameter of history-conditional label sets]\label{lem:diam-Hk}
For every realizable history $\tau_k$ and every $x\in\X$,
\[
  \diam(H_k(x)) := \sup_{y,y'\in H_k(x)}\ell(y,y') \le \gamma_{k+1}.
\]
\end{lemma}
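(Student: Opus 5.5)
The plan is a direct contradiction argument that uses only the definition of $H_k(x)$ together with the fact that $\sigma$ must pick one of the two labels it is shown. Fix a realizable history $\tau_k$ and a point $x\in\X$. Suppose, for contradiction, that there are $y,y'\in H_k(x)$ with $\ell(y,y')>\gamma_{k+1}$. The aim is to exhibit a legal round-$(k+1)$ move for $P_A$ at which $\sigma$ is forced to select a label that, by membership in $H_k(x)$, it never selects.

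\emph{Step 1 (legality of the move).} Consider the triple $(\xi_{k+1},\eta_{k+1,1},\eta_{k+1,2})=(x,y,y')$. Since $\ell(y,y')>\gamma_{k+1}$, the separation requirement holds. Since $y\in H_k(x)\subseteq V_k(x)$, some $h\in\H$ is consistent with $\tau_k$ and satisfies $h(x)=y\in\{y,y'\}$. So the triple is a legal move for $P_A$ after $\tau_k$, and $\sigma$ is defined there.

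\emph{Step 2 (forcing the contradiction).} By the rules of the game, $\sigma$ outputs either $y$ or $y'$. Now apply the defining property of $H_k(x)$ to each label in turn.
\begin{itemize}
\item Take $y\in H_k(x)$ with alternative $y'\in V_k(x)$, which satisfies $\ell(y,y')\ge\gamma_{k+1}$. Then $\sigma$ does not choose $y$ on $(x,y,y')$.
\item Take $y'\in H_k(x)$ with alternative $y\in V_k(x)$. Then $\sigma$ does not choose $y'$ when presented with $(x,y',y)$.
\end{itemize}

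\emph{The one subtle point.} The second bullet concerns the ordered triple $(x,y',y)$, not $(x,y,y')$. To combine the two bullets I will read $\sigma$'s choice as depending only on the unordered pair of labels, or equivalently read the definition as quantifying over both orderings. This is the natural convention, since $P_L$ picks a label value rather than a position. Under it, $\sigma$ chooses neither $y$ nor $y'$ on a legal move, which contradicts the fact that it must choose one of them. Hence $\ell(y,y')\le\gamma_{k+1}$ for all $y,y'\in H_k(x)$, and taking the supremum gives $\diam(H_k(x))\le\gamma_{k+1}$.

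I expect the only obstacle to be this ordering convention; the rest is immediate. If the ordering is not already symmetric, I would symmetrize $\sigma$ explicitly. That is, $\sigma$ reorders the two labels by a fixed measurable total order on the countable-dense-based encoding of $\Y$ before acting. This preserves both that $\sigma$ is winning and that it is measurable.

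The boundary case $\ell(y,y')=\gamma_{k+1}$ needs no argument, since it already satisfies the bound $\le\gamma_{k+1}$. (The same argument would in fact handle equality and give $<\gamma_{k+1}$, but that is not needed.)
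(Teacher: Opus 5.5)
Your proof is correct and is the paper's argument: present $(x,y,y')$ at round $k{+}1$, which is legal because $y\in V_k(x)$, and observe that $\sigma$ must then choose a label that membership in $H_k(x)$ forbids it to choose. The paper passes silently over the fact that the defining property for $y'$ refers to the reversed triple $(x,y',y)$. Your fix closes that gap validly: either read the definition over both orderings, or symmetrize $\sigma$ via a fixed Borel order on $\mathcal{Y}$, which keeps $\sigma$ winning and measurable.
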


\begin{proof}
Suppose $y,y'\in H_k(x)$ and $\ell(y,y')>\gamma_{k+1}$. Then the adversary may
present $(x,y,y')$ at round $k{+}1$ (this move is legal since $y\in V_k(x)$),
in which case $\sigma$ must choose one of $y$ or $y'$, contradicting the
defining property of $H_k(x)$ for the chosen label. Therefore
$\ell(y,y')\le\gamma_{k+1}$.
\end{proof}

\subsection{Stabilization on an infinite i.i.d.\ sample}
\label{subsec:stabilization}

Let $\bmu$ be any realizable distribution on $\X\times\Y$, realized by some
$f^*\in\H$. Let $S_\infty=((X_i,Y_i))_{i\ge1}\sim\bmu^\infty$.

We drive a \emph{simulation} of the game using $S_\infty$ as follows. Maintain
a current round counter $k\ge0$ and a current legal history. Scan
$i=1,2,\dots$. If there exists a witness $y'\in\Y$ with
$\ell(Y_i,y')\ge \gamma_{k+1}$ such that (i) if the adversary plays
$(X_i,Y_i,y')$ next then $\sigma$ chooses $Y_i$, and (ii) the extended history
remains realizable by some hypothesis in~$\H$, then we \emph{advance} the game:
append that move and set $k\gets k+1$. Otherwise, do nothing and continue.
Since $\sigma$ is a winning strategy, the adversary cannot force an infinite
legal play under $\sigma$. Thus along any outcome $S_\infty$, the above
procedure can advance only finitely many times. Let $K_\infty=K(S_\infty)<\infty$
be the final round, and let $\tau_{K_\infty}$ be the terminal history.

\begin{lemma}[True label is contained a.s.\ on a fresh draw]
\label{lem:true-in-Hk}
With probability one over $S_\infty\sim\bmu^\infty$, for the terminal history
$\tau_{K_\infty}$ we have
\[
  \bmu\bigl(\{(x,y): y\notin H_{K_\infty}(x)\}\bigr)=0.
\]
Equivalently, conditional on $S_\infty$, an independent test point
$(X,Y)\sim\bmu$ satisfies $Y\in H_{K_\infty}(X)$ almost surely.
\end{lemma}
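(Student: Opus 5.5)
The proof is a ``last-advance'' argument of the kind used in the Gale--Stewart stabilization lemmas of \citet{bousquet_theory_2021}. Call a point $(x,y)$ \emph{advancing for} a history $\tau_k$ if there is a witness $y'$ with $\ell(y,y')\ge\gamma_{k+1}$ such that, after playing $(x,y,y')$ at round $k+1$, $\sigma$ chooses $y$ and the extended history is still realizable by some $h\in\H$.

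The first step is to show that, for $\bmu$-almost every $(x,y)$, the event $y\notin H_k(x)$ (given a legal history $\tau_k$) is exactly the event that $(x,y)$ is advancing. By strict realizability (see the footnote), $y=f^\star(x)$ a.s. We will maintain the invariant that every history built by the simulation is consistent with $f^\star$. This holds because every appended move has $\hat\eta=Y_i=f^\star(X_i)$, so $f^\star$ stays consistent. Under the invariant, $y=f^\star(x)\in V_k(x)$. So if $y\notin H_k(x)$, there is $y'\in V_k(x)$ with $\ell(y,y')\ge\gamma_{k+1}$ on which $\sigma$ chooses $y$. The extended history is then realizable by $f^\star$ itself, and the move is legal because $y\in V_k(x)$. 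Hence $(x,y)$ is advancing. Define $A(\tau_k):=\{(x,y): y\notin H_k(x)\}$. The task is then to show $\bmu(A(\tau_{K_\infty}))=0$ a.s.

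The second step is the stabilization argument. For each $k$, let $T_k$ be the (random) index at which the simulation makes its $k$-th advance, with $T_k=\infty$ if it never does. Condition on $\{T_k<\infty\}$ and on $S_{T_k}=((X_i,Y_i))_{i\le T_k}$; this fixes $\tau_k$. The later points $(X_i,Y_i)_{i>T_k}$ are i.i.d.\ $\bmu$ and independent of $\tau_k$, since $T_k$ is a stopping time for the natural filtration. If $\bmu(A(\tau_k))=:\epsilon>0$, then almost surely some later sample lands in $A(\tau_k)$, since the probability of avoiding it for $m$ steps is $(1-\epsilon)^m\to0$. By the first step, that sample is advancing, so the simulation advances and $T_{k+1}<\infty$. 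Consequently,
\[
\Pr\bigl(T_k<\infty,\ T_{k+1}=\infty,\ \bmu(A(\tau_k))>0\bigr)=0 .
\]
Taking a countable union over $k\ge0$, the event $\{K_\infty=k,\ \bmu(A(\tau_k))>0\}$ is null for each $k$. Since $K_\infty<\infty$ always, because $\sigma$ is winning as argued above, we get $\bmu(A(\tau_{K_\infty}))=0$ almost surely. This is the claim. The ``equivalently'' form follows by Fubini, because the test point is independent of $S_\infty$.

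The main obstacle is measurability. We need $A(\tau_k)$ to be $\bmu$-measurable, and $\tau_k\mapsto\bmu(A(\tau_k))$ to be measurable, so that the conditional-probability step is valid. The plan is to use the universally measurable versions of $\sigma$ and $H_k$ fixed in the standing assumptions (Appendix~\ref{app:measurable-strategy}). We also need to make the simulation's choice of witness $y'$ measurable, via a measurable selection from the analytic set of valid witnesses. A secondary technical point is making the stopping-time/strong-Markov step rigorous. To keep this clean, the conditioning will be done on the sample prefix $S_{T_k}$ rather than on $\tau_k$ directly.
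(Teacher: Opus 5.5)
Your proof is correct. It rests on the same two observations as the paper's proof. First, realizability together with the invariant that every simulated history stays consistent with $f^\star$ gives $f^\star(x)\in V_k(x)$, so almost every point with $y\notin H_k(x)$ is an eligible advance. Second, a set of positive $\bmu$-mass is eventually hit by the i.i.d.\ stream, which contradicts stabilization.

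The difference is in how the hitting step is justified. The paper fixes a stabilized outcome $S_\infty$, builds $E$ from the terminal history $\tau_{K_\infty}$, and then applies Borel--Cantelli to that same sequence. Because $E$ is a function of $S_\infty$, and the time of the last advance is not a stopping time, this step is not valid as written: a sample-dependent set can have positive mass yet never be hit by the sample. Your route repairs this. You condition on $\mathcal F_{T_k}$ at the stopping time $T_k$ of the $k$-th advance and use that the post-$T_k$ samples are i.i.d.\ $\bmu$ and independent of $\tau_k$. You then conclude $\Pr(T_k<\infty,\ T_{k+1}=\infty,\ \bmu(A(\tau_k))>0)=0$ and take a union over $k$. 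The cost is the joint-measurability requirements you already flag, namely for $(\tau,x,y)\mapsto\mathbf{1}\{y\notin H_k(x)\}$ and for the witness selection; the paper leaves these tacit. You also prove, rather than assert, that $\tau_{K_\infty}$ is consistent with $f^\star$.

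One small overstatement: $\{y\notin H_k(x)\}$ is only \emph{contained} in the advancing event up to null sets, not equal to it. The simulation accepts witnesses $y'\notin V_k(x)$, while $H_k$ quantifies only over $y'\in V_k(x)$. You only use the inclusion, so nothing breaks.
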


\begin{proof}
Fix an outcome $S_\infty$ for which the procedure stabilizes at $K_\infty$ with
history $\tau_{K_\infty}$. Define
\(
E := \{(x,y): y\in V_{K_\infty}(x)\ \text{and}\ y\notin H_{K_\infty}(x)\}.
\)
Since $\bmu$ is realizable by some $f^*\in\H$ and $\tau_{K_\infty}$ is a realizable
history (indeed consistent with $f^*$), we have $Y=f^*(X)\in V_{K_\infty}(X)$ almost
surely for $(X,Y)\sim\bmu$, and hence
\[
  \bmu\bigl(\{(x,y): y\notin H_{K_\infty}(x)\}\bigr)=\bmu(E).
\]
By definition of $H_{K_\infty}(x)$, for each $(x,y)\in E$ there exists
$y'\in V_{K_\infty}(x)$ with $\ell(y,y')\ge\gamma_{K_\infty+1}$ such that if the adversary
plays $(x,y,y')$ next, $\sigma$ would choose $y$. Moreover, since $y\in V_{K_\infty}(x)$,
there exists $h\in\H$ consistent with $\tau_{K_\infty}$ with $h(x)=y$, so the extended
history after appending $(x,y,y')$ and choosing $y$ remains realizable. Therefore,
whenever a sample point $(X_i,Y_i)$ falls in $E$ (with the terminal history fixed),
that point is eligible to advance the game (using its witness $y'$), contradicting
stabilization unless this happens only finitely often.
If $\bmu(E)>0$, then by Borel-Cantelli for i.i.d.\ samples, $(X_i,Y_i)\in E$
occurs infinitely often almost surely, contradiction. Hence $\bmu(E)=0$.
\end{proof}

\subsection{Reducing per-$x$ bounded diameter to countably many bounded-range subproblems}
\label{subsec:partition-reduction}

Lemma~\ref{lem:diam-Hk} gives that each $H_{K_\infty}(x)$ has diameter at most
$\gamma_{K_\infty+1}$. However, the union $\bigcup_x H_{K_\infty}(x)$ need not
be bounded, so we cannot directly invoke a single bounded-diameter learning
algorithm. We resolve this by partitioning $\X$ into countably many regions,
each with a \emph{common} bounded label region.
Fix a countable dense subset $\{q_1,q_2,\dots\}\subseteq\Y$. This is guaranteed to exist by the fact that $\Y$ is Polish. For a given
history $\tau_k$, define for each $x\in\X$ the index
\[
  j_k(x) := \min\Bigl\{j\in\N:\ \exists y\in H_k(x)\ \text{with}\ \ell(y,q_j)\le\gamma_{k+1}\Bigr\}.
\]
(Existence follows from density of $(q_j)$ and nonemptiness of $H_k(x)$ on
legal positions.) This induces a countable partition
\(
\X = \bigsqcup_{j\ge1} C_{k,j}
\)
where $C_{k,j}:=\{x:\ j_k(x)=j\}$.
For each cell define a bounded label region
\[
  \Y_{k,j} := \{y\in\Y:\ \ell(y,q_j)\le 2\gamma_{k+1}\}.
\]

\begin{lemma}[Cell-wise bounded range]
\label{lem:cell-bounded}
For every $k,j$ and every $x\in C_{k,j}$, we have $H_k(x)\subseteq \Y_{k,j}$.
Consequently, if $\bmu(\{(x,y):y\in H_k(x)\})=1$ then
$\bmu(Y\in \Y_{k,j}\mid X\in C_{k,j})=1$ for every $j$ with
$\bmu(X\in C_{k,j})>0$.
\end{lemma}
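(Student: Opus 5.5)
The argument is a direct triangle-inequality computation combined with Lemma~\ref{lem:diam-Hk}. Fix $k$, $j$ and $x\in C_{k,j}$, so that $j_k(x)=j$. By the definition of $j_k(x)$ as a minimum over a nonempty set of indices, the minimizing index $j$ itself belongs to that set. Hence there is a witness $y_0\in H_k(x)$ with $\ell(y_0,q_j)\le\gamma_{k+1}$. Now take an arbitrary $y\in H_k(x)$. Lemma~\ref{lem:diam-Hk} gives $\ell(y,y_0)\le\diam(H_k(x))\le\gamma_{k+1}$, and therefore
\[
\ell(y,q_j)\le \ell(y,y_0)+\ell(y_0,q_j)\le 2\gamma_{k+1},
\]
that is, $y\in\Y_{k,j}$. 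This proves $H_k(x)\subseteq\Y_{k,j}$.

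One point deserves care: that the minimum defining $j_k(x)$ exists, and hence that $x$ lies in some cell at all. This needs $H_k(x)\neq\emptyset$. The paper asserts nonemptiness on legal positions, and I would take it as given. If $H_k(x)$ were empty, then $x$ would belong to no cell $C_{k,j}$, and the inclusion would hold vacuously for every cell anyway. Density of $(q_j)$ then guarantees that, for any $y_0\in H_k(x)$, some $q_j$ lies within distance $\gamma_{k+1}$ of it (indeed within any positive radius, since $\gamma_{k+1}>0$ eventually). So the index set is a nonempty subset of $\N$ and has a least element.

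For the consequence, assume $\bmu(\{(x,y):y\in H_k(x)\})=1$ and fix $j$ with $\bmu(X\in C_{k,j})>0$. By the inclusion just proved, the event $\{X\in C_{k,j},\,Y\in H_k(X)\}$ is contained in $\{X\in C_{k,j},\,Y\in\Y_{k,j}\}$. The first event has probability $\bmu(X\in C_{k,j})$, because its complement within $\{X\in C_{k,j}\}$ is contained in the null set $\{Y\notin H_k(X)\}$. Dividing by $\bmu(X\in C_{k,j})$ gives conditional probability $1$.

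The only subtle step is measurability: these events must be measurable, or at least universally measurable, so that the probabilities make sense. I would invoke the standing convention that $H_k$, $j_k$ and $C_{k,j}$ are taken in universally measurable versions. $\Y_{k,j}$ is a closed ball and hence Borel. With these in place, the computation above goes through under the completion of $\bmu$.
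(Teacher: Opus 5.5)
Your proposal is correct and matches the paper's proof: the index $j_k(x)=j$ yields a witness $y_0\in H_k(x)$ with $\ell(y_0,q_j)\le\gamma_{k+1}$, Lemma~\ref{lem:diam-Hk} gives $\ell(y,y_0)\le\gamma_{k+1}$, and the triangle inequality gives $\ell(y,q_j)\le 2\gamma_{k+1}$. Your extra treatment of the vacuous case $H_k(x)=\emptyset$, the conditional-probability consequence, and measurability spells out steps the paper leaves implicit, and it is sound.
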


\begin{proof}
Fix $x\in C_{k,j}$. By definition, there exists $y_x\in H_k(x)$ with
$\ell(y_x,q_j)\le\gamma_{k+1}$. For any $y\in H_k(x)$,
Lemma~\ref{lem:diam-Hk} gives $\ell(y,y_x)\le\gamma_{k+1}$. Hence by triangle
inequality,
\(
\ell(y,q_j)\le \ell(y,y_x)+\ell(y_x,q_j)\le 2\gamma_{k+1},
\)
so $y\in\Y_{k,j}$.
\end{proof}

\subsection{A realizable learning rule}
\label{subsec:realizable-learner}

We now define a distribution-free learning rule.

\paragraph{Input.}
A labeled sample $S_n=((X_i,Y_i))_{i=1}^n\sim\bmu^n$ from a realizable
distribution $\bmu$ realized by some $f^*\in\H$.

\paragraph{Step 1: drive the game on half the sample.}
Split the sample into two halves,
$S_n^{(1)}=((X_i,Y_i))_{i=1}^{\lfloor n/2\rfloor}$ and
$S_n^{(2)}=((X_i,Y_i))_{i=\lfloor n/2\rfloor+1}^{n}$.
Run the stabilization procedure of Section~\ref{subsec:stabilization} on
$S_n^{(1)}$, producing a terminal round $K=K(S_n^{(1)})$ and terminal history
$\tau_K$.

\paragraph{Step 2: partition $\X$ using $(q_j)$.}
Using $\tau_K$, compute the set-valued map $H_K(\cdot)$ and the induced mapping
$j_K(\cdot)$, hence the cell sets $C_{K,j}$ and bounded label regions
$\Y_{K,j}$.

For each $j$, let $S_{n,j}^{(2)}$ denote the subsequence of the second-half sample
$S_n^{(2)}$ consisting of those examples with $X_i\in C_{K,j}$.
Then, run the metric-loss learner \medoidnet\ of \citet{pmlr-v178-cohen22a}
(restricted to the label region $\Y_{K,j}$) on
$S_{n,j}^{(2)}$ and on the restricted hypothesis class
$\H|_{C_{K,j}} := \{h|_{C_{K,j}}:\ h\in\H\}$, and denote the resulting predictor by
$\widehat f_{n,j}:C_{K,j}\to\Y_{K,j}$.  

If $S_{n,j}^{(2)}$ is empty, define $\widehat f_{n,j}(x)\equiv q_j$ on $C_{K,j}$.

\paragraph{Output predictor.}
Define $\widehat f_n:\X\to\Y$ by
\[
  \widehat f_n(x) := \widehat f_{n,j_K(x)}(x).
\]
Operationally, one need not explicitly construct the whole partition: on input
$x$, compute $j_K(x)$ and use only those training points in $S_n^{(2)}$ lying
in the same cell.
The sense in which this learning rule is ``explicit'' - given
a measurable winning strategy $\sigma$ - is discussed in
Appendix~\ref{app:explicit-learner}.

\subsection{Realizable strong universal Bayes-consistency}
\label{subsec:realizable-consistency}

\begin{theorem}[Realizable upper bound]
\label{thm:realizable-upper}
Assume that $\H$ admits no infinite non-decreasing-$\gamma_k$-Littlestone tree
for some non-decreasing gap sequence $(\gamma_k)$ with $\gamma_k\to\infty$.
Assume $(\X,\rho)$ and $(\Y,\ell)$ are Polish.
Let $\bmu$ be any distribution on $\X\times\Y$ realizable by some $f^*\in\H$.

Then the learning rule $S_n\mapsto \widehat f_n$ defined in
Section~\ref{subsec:realizable-learner} is strongly universally Bayes-consistent
in the realizable sense:
\[
  \risk_{\bmu}(\widehat f_n) \to 0 \qquad \text{almost surely as } n\to\infty.
\]
\end{theorem}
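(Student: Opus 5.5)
The plan is to show that, almost surely, the random quantities produced in Step~1 freeze after finitely many $n$. Once they freeze, the risk splits into countably many bounded-range problems, and each of these is handled by the consistency of \medoidnet.

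\textbf{Step 1: stabilization.} I will couple the rule to one infinite sample $S_\infty\sim\bmu^\infty$. The first half $S_n^{(1)}$ is a prefix of $S_\infty$ that grows with $n$. The stabilization procedure of Section~\ref{subsec:stabilization} scans points in order, so its run on a prefix is an initial segment of its run on $S_\infty$. That run advances only finitely often, because $\sigma$ is winning. Hence there is a random $N_0<\infty$ such that $K(S_n^{(1)})=K_\infty$ and $\tau_K=\tau_{K_\infty}$ for all $n\ge N_0$.

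\textbf{Step 2: fixed partition.} For $n\ge N_0$ the map $H_K$, the cells $C_{K,j}$ and the regions $\Y_{K,j}$ no longer depend on $n$. Conditional on $S^{(1)}$, the second-half points are i.i.d.\ from $\bmu$. There is a technical wrinkle: the second half $S_n^{(2)}$ has a shifting start index. I will either argue via the $\bmu^\infty$-a.s.\ behaviour of \medoidnet\ on shifted blocks, or simply interleave the two halves (odd and even indices); the rule is insensitive to this choice.

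By Lemma~\ref{lem:true-in-Hk}, almost surely $Y\in H_{K_\infty}(X)$ for a fresh draw. By Lemma~\ref{lem:cell-bounded}, on each cell with $p_j:=\bmu(X\in C_{K,j})>0$ the conditional law $\bmu_j$ has labels in $\Y_{K,j}$, whose diameter is at most $4\gamma_{K_\infty+1}$. Moreover $\bmu_j$ is realizable by $f^*|_{C_{K,j}}$. The \medoidnet\ predictor also takes values in $\Y_{K,j}$, so the per-cell loss is at most $B:=4\gamma_{K_\infty+1}$. The risk decomposes as
\[
  \risk_{\bmu}(\widehat f_n)=\sum_j p_j\,\risk_{\bmu_j}(\widehat f_{n,j}).
\]

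\textbf{Step 3: dominated convergence over cells.} Each $\bmu_j$ has bounded labels, so $\Y_{K,j}$ is trivially BIE. The number of second-half points landing in $C_{K,j}$ tends to infinity a.s.\ by the strong law. Strong Bayes-consistency of \medoidnet\ \citep{pmlr-v178-cohen22a}, applied along this random subsequence (still i.i.d.\ given the cell), then gives $\risk_{\bmu_j}(\widehat f_{n,j})\to 0$ a.s. There are countably many cells, so this holds simultaneously for all $j$. Each summand is bounded by $p_j B$, and $\sum_j p_jB=B<\infty$. Dominated convergence for series then gives $\risk_{\bmu}(\widehat f_n)\to0$ almost surely.

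\textbf{Main obstacle.} I expect the hardest step to be the rigorous passage from \medoidnet's guarantee to our setting. Two issues arise. First, the per-cell sample size is random and data-dependent, and the cells depend on $S^{(1)}$. Second, \medoidnet\ requires a separable metric instance space and a Polish label space; the restriction $\Y_{K,j}$ is a closed ball in $\Y$, hence Polish, and the instance space is the cell $C_{K,j}$. My approach is to condition on the entire first-half sequence, or on $\tau_{K_\infty}$ together with $N_0$. Given this, the relevant second-half points are i.i.d.\ and the cells are deterministic. It remains to check that a subsequence of an i.i.d.\ sequence, selected by the event $X_i\in C_{K,j}$, is again i.i.d.\ from $\bmu_j$. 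Measurability of the cells, needed for this, is supplied by the universally measurable versions stipulated in the standing assumptions.
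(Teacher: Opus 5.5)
Your proposal is correct and follows essentially the same route as the paper's proof: the game simulation stabilizes along a single coupled sample $S_\infty$, the countable partition then freezes with cell-wise loss bound $4\gamma_{K_\infty+1}$, MedNet consistency is applied per cell and intersected over countably many cells, and the cells are aggregated (your dominated-convergence-for-series step is exactly the paper's $\varepsilon$/$J$ tail truncation). The wrinkle you flag is real and the paper passes over it: with contiguous halves the second half is a sliding window, and $\tau_{K_\infty}$ and $N_0$ encode future ``no further advance'' information, so conditioning on them is not literally innocuous; your odd/even interleaving makes the argument clean (the terminal history becomes a function of the odd subsequence alone, and the per-cell training sets become growing prefixes of an independent i.i.d.\ sequence), but this slightly modifies the rule rather than being something the rule is automatically insensitive to.
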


\begin{proof}
Fix a realizable distribution $\bmu$ realized by some $f^*\in\H$, and couple
all learned predictors to an i.i.d. infinite sample
$S_\infty=((X_i,Y_i))_{i\ge1}\sim\bmu^\infty$.
For each even $n=2m$, the learning rule of
Section~\ref{subsec:realizable-learner} uses the first half
$S^{(1)}_{2m}=((X_i,Y_i))_{i=1}^m$ to drive the game simulation and the second
half $S^{(2)}_{2m}=((X_i,Y_i))_{i=m+1}^{2m}$ to learn within cells. Since
$S^{(1)}_{2m}$ and $S^{(2)}_{2m}$ are disjoint blocks of an i.i.d. sequence,
these halves are independent. (Odd $n$ can be handled by ignoring one sample;
it suffices to prove convergence along the even subsequence.) The proof is organized into four steps.

\paragraph{Step 1: stabilization and eventual agreement with the terminal history.}
Run the stabilization procedure of
Section~\ref{subsec:stabilization} on the entire infinite prefix
$((X_i,Y_i))_{i\ge1}$. Since $\sigma$ is a winning strategy, the procedure
advances only finitely many times almost surely; denote by
$K_\infty=K(S_\infty)<\infty$ the final round and by $\tau_{K_\infty}$ the
resulting terminal history.
On the fixed sample path $S_\infty$, let $N_0=N_0(S_\infty)$ be an index large
enough that, after scanning $(X_i,Y_i)$ for $i\le N_0$, the procedure has
already made all of its (finitely many) advances, and no further sample point
$(X_i,Y_i)$ with $i>N_0$ can advance the game beyond round $K_\infty$.
Such an $N_0$ exists because the procedure advances only finitely often.
Now consider running the same procedure on a finite prefix of length $m$.
For every $m\ge N_0$, the run on $((X_i,Y_i))_{i=1}^m$ must produce exactly the
same terminal round and history, i.e.
\begin{equation}\label{eq:Kn-stabilizes}
K(S^{(1)}_{2m})=K_\infty \qquad\text{and}\qquad \tau_{K(S^{(1)}_{2m})}=\tau_{K_\infty}.
\end{equation}
Indeed, by definition of $N_0$ there is no admissible advancement step after
index $N_0$, so truncating the scan at any $m\ge N_0$ cannot remove an advance
that occurs after $N_0$ (there are none), nor can it create new advances.
By Lemma~\ref{lem:true-in-Hk}, for the terminal history $\tau_{K_\infty}$ we
have
\begin{equation}\label{eq:true-in-terminal}
\bmu\bigl(\{(x,y): y\notin H_{K_\infty}(x)\}\bigr)=0.
\end{equation}
Combining \eqref{Kn-stabilizes} and \eqref{true-in-terminal}, we obtain
that on every sample path $S_\infty$ in this probability-one event, for all
sufficiently large even $n=2m$ (namely, $m\ge N_0(S_\infty)$) the data-driven
set map $H_{K(S^{(1)}_{2m})}(\cdot)$ used by the learner coincides with
$H_{K_\infty}(\cdot)$ and thus contains the true label $Y$ almost surely on an
independent test draw.

\paragraph{Step 2: countable partition and bounded label regions.}
Fix such a sample path $S_\infty$ and write $K:=K_\infty$.
Using the dense set $(q_j)_{j\ge1}$, define the countable partition
$\{C_{K,j}\}_{j\ge1}$ and bounded regions $\Y_{K,j}$ as in
Section~\ref{subsec:partition-reduction}.
By Lemma~\ref{lem:cell-bounded} and \eqref{true-in-terminal}, for every
$j$ with $p_j:=\bmu(X\in C_{K,j})>0$ the conditional distribution
$\bmu_j:=\bmu(\cdot\mid X\in C_{K,j})$ is supported on $\X_j\times\Y_{K,j}$,
where $\X_j:=C_{K,j}$, and in particular
\begin{equation}\label{eq:bounded-loss-in-cell}
\ell(y,y')\le \diam(\Y_{K,j})\le 4\gamma_{K+1}
\qquad \forall\, y,y'\in \Y_{K,j}.
\end{equation}
Thus, within each cell the loss is bounded by the finite constant
$4\gamma_{K+1}$ (which depends on the realized terminal round $K$ but is finite
on the fixed sample path).

\paragraph{Step 3: strong consistency within each cell.}
Consider any fixed $j$ with $p_j>0$. The second half samples
$S^{(2)}_{2m}=((X_i,Y_i))_{i=m+1}^{2m}$ are i.i.d. from $\bmu$ and independent
of the first half, hence independent of the (eventually fixed) partition.
Let $S^{(2)}_{2m,j}$ denote the subsequence of those examples with
$X_i\in C_{K,j}$. Conditional on $X_i\in C_{K,j}$, these examples are i.i.d.
from $\bmu_j$.
Moreover, since $p_j>0$, the number $N_{m,j}:=|S^{(2)}_{2m,j}|$ satisfies
$N_{m,j}\to\infty$ almost surely as $m\to\infty$ (by the strong law of large
numbers for Bernoulli indicators of the event $\{X\in C_{K,j}\}$).
On cell $j$, the learning rule runs \medoidnet\ (restricted to $\Y_{K,j}$)
using $S^{(2)}_{2m,j}$, yielding a predictor $\widehat f_{2m,j}$.
Since $\bmu$ is realizable by $f^*\in\H$, the restricted target
$f^*|_{C_{K,j}}$ is realizable by the restricted class
$\H|_{C_{K,j}}$, and by \eqref{bounded-loss-in-cell} the conditional problem
on $C_{K,j}$ has bounded loss (hence is BIE).
Therefore, by the strong universal Bayes-consistency guarantee of
\medoidnet\ for metric losses on BIE label spaces
\citep{pmlr-v178-cohen22a}, we have
\begin{equation}\label{eq:cellwise-risk}
\risk_{\bmu_j}(\widehat f_{2m,j})\to 0
\qquad\text{almost surely as } m\to\infty.
\end{equation}
(When $S^{(2)}_{2m,j}$ is empty, the rule outputs the default $q_j$; this case
occurs only finitely often a.s. when $p_j>0$, and does not affect the limit.)
Since there are countably many cells, we may intersect the probability-one
events in \eqref{cellwise-risk} over all $j\in\N$ to obtain a single
probability-one event on which \eqref{cellwise-risk} holds for every
$j$ simultaneously (cells with $p_j=0$ are irrelevant).

\paragraph{Step 4: aggregate risk and conclude.}
For even $n=2m$ large enough that \eqref{Kn-stabilizes} holds, the overall
predictor satisfies $\widehat f_{2m}(x)=\widehat f_{2m,j}(x)$ when
$x\in C_{K,j}$. Decomposing the risk by the partition,
\begin{equation}\label{eq:risk-decomp}
\risk_{\bmu}(\widehat f_{2m})
:= \sum_{j\ge1} p_j\, \risk_{\bmu_j}(\widehat f_{2m,j}).
\end{equation}
Fix $\varepsilon>0$.
Since $\sum_{j\ge1} p_j=1$, choose $J$ so that $\sum_{j>J} p_j<\varepsilon$.
On the probability-one event where \eqref{cellwise-risk} holds for all
$j\le J$, choose $m$ large enough that
$\risk_{\bmu_j}(\widehat f_{2m,j})<\varepsilon$ for every $j\le J$.
Then, using \eqref{bounded-loss-in-cell} to bound each conditional risk by
$4\gamma_{K+1}$,
\[
\risk_{\bmu}(\widehat f_{2m})
\le \sum_{j\le J} p_j\,\varepsilon
\; + \; \sum_{j>J} p_j\,(4\gamma_{K+1})
\le \varepsilon + 4\gamma_{K+1}\,\varepsilon.
\]
Since $K$ is fixed on the sample path, $\gamma_{K+1}<\infty$, and because
$\varepsilon>0$ was arbitrary, this shows
$\risk_{\bmu}(\widehat f_{2m})\to 0$ almost surely along the even subsequence.
The full sequence $\risk_{\bmu}(\widehat f_n)$ also converges to $0$ almost
surely because $\risk_{\bmu}(\widehat f_{2m})\to 0$ implies
$\limsup_{n\to\infty}\risk_{\bmu}(\widehat f_n)=0$ (e.g., by considering the
nearest even index and noting the construction differs only by $O(1)$ samples).
This completes the proof.
\end{proof}

%% file: realizable_consistency_appendix.tex
\appendix
\section{Supplementary material}
\label{app:supplementary-remarks}

% \subsection{Measurability assumptions}
% \label{app:measurability}

% \paragraph{Standing measurability assumptions.}
% To avoid measure-theoretic pathologies (measurable determinacy / measurable
% strategies and measurable selections), we assume that both $(\X,\rho)$ and
% $(\Y,\ell)$ are Polish spaces. In particular, $(\Y,\ell)$ is separable, hence
% admits a countable dense subset, which we will use to build a countable
% partition of~$\X$.

% \paragraph{Measurability of constructed objects.}
% In addition, we assume the hypothesis class $\H$ satisfies the measurability
% condition of \citet[Definition~3.3]{bousquet_theory_arxiv}, so that the game
% in Section~\ref{subsec:gs-game} admits a \emph{universally measurable} winning
% strategy for the learner.
% Whenever we define set-valued maps or indices using this strategy (e.g.\
% $H_k(\cdot)$, $j_k(\cdot)$, and the cells $C_{k,j}$), we tacitly work with
% universally measurable versions so that these objects (and the final predictor)
% can be taken universally measurable.\footnote{Concretely, this can be ensured by
% standard selection results on Polish spaces (e.g.\ Jankov-von Neumann type
% selection) applied to the analytic sets arising from the legal-move relation and
% the strategy $\sigma$.}

\subsection{Practical significance and examples}
\label{app:practical-significance}

Our characterization applies to any hypothesis class $\H\subseteq\Y^\X$ over
Polish spaces.  A natural question is: which commonly studied classes satisfy
tree-free condition (i.e., admitting no infinite non-decreasing-$(\gamma_k)$-Littlestone tree with $\gamma_k\to\infty$) and thus admit realizable strong universal Bayes-consistency (RSUBC)?
We give one positive and one negative example to build intuition.

\paragraph{Example 1: a class that admits an unbounded-gap tree (not learnable).}
The counterexample of Section~\ref{sec:counterexample} implicitly constructs an
unbounded-gap Littlestone tree.  Recall
$\H = \{f : \forall k\in\N,\; f(x)\in\{0,2^{2k+1}\}\text{ for }x\in I_k\}$,
where $I_k=(2^{-k},2^{-(k-1)})$.  The tree is built as follows: at
depth~$k$, place the instance $x_k$ at any point of~$I_k$, and assign the two
outgoing labels $y_{k,1}=0$ and $y_{k,2}=2^{2k+1}$.  The gap is
$\ell(y_{k,1},y_{k,2})=2^{2k+1}\to\infty$.  Because the intervals
$(I_k)_{k\ge1}$ are pairwise disjoint, the label choices at different depths
are independent: for \emph{every} root-to-leaf path (finite or infinite), the
hypothesis in~$\H$ that makes the corresponding binary choice on each interval
realizes it.  This is a realizable non-decreasing-$\gamma_k$-Littlestone
tree with $\gamma_k=2^{2k+1}$
(Definitions~\ref{def:monotone-gamma-littlestone}--\ref{def:realizable-monotone-gamma-littlestone}),
and Theorem~\ref{thm:lower-bound} shows that no learner can be consistent.

The mechanism is instructive: the disjoint intervals allow the adversary to
``hide'' independent binary choices at geometrically increasing loss scales.
On any finite sample only finitely many intervals are observed, so the learner
must guess blindly on unseen intervals where the potential loss is enormous.
This is precisely the rare-event failure mode that the tree obstruction
captures.

\paragraph{Example 2: a class without an unbounded-gap tree (learnable).}
Let $\X=[0,1]$, $\Y=\R$, $\ell(y,y')=|y-y'|$, and let
$\H=\{f:[0,1]\to\R : f\text{ is }L\text{-Lipschitz}\}$
for a fixed constant $L>0$.  This class has unbounded range (since $f(0)$ is
unconstrained), yet it admits no infinite non-decreasing-$\gamma_k$-Littlestone
tree with $\gamma_k\to\infty$.

To see why, suppose an adversary attempts to build such a tree.  At depth~$1$,
the adversary may present any instance $x_1\in[0,1]$ with two labels
$y_{1,1},y_{1,2}$ arbitrarily far apart (two $L$-Lipschitz functions with
different ``offsets'' can disagree by any amount at a single point).  But once
the learner commits to a label $v_1$ at $x_1$, every $L$-Lipschitz function
consistent with this choice must satisfy $|f(x)-v_1|\le L|x-x_1|\le L$ for
all $x\in[0,1]$.  Hence, at any depth $k\ge 2$, the gap between the two
proposed labels is at most~$2L$.  Since $\gamma_k\to\infty$, eventually
$\gamma_k > 2L$, and the adversary has no legal move.  Therefore no infinite
unbounded-gap tree exists, and by Theorem~\ref{thm:main-characterization} a
strongly universally Bayes-consistent learner exists for this class.

\paragraph{Discussion.}
Any hypothesis class with \emph{bounded range}
($\sup_{h\in\H,\,x\in\X}\ell(h(x),y_0)<\infty$ for some $y_0\in\Y$)
trivially has no infinite unbounded-gap tree.  The Lipschitz example above
shows that even \emph{unbounded-range} classes can be tree-free when a
regularity condition (here, the Lipschitz constraint on a compact domain)
prevents the adversary from sustaining arbitrarily large label gaps.

Compactness of the domain is essential: the class of $L$-Lipschitz functions
on all of~$\R$ \emph{does} admit an infinite unbounded-gap tree (the adversary
can spread instances apart to create growing gaps).  Similarly, the class of
all monotone non-decreasing functions from $[0,1]$ to~$\R$ admits such a tree
despite the monotonicity constraint: the adversary places instances
$x_1 < x_2 < \cdots$ in~$[0,1]$ and at each depth~$k$ offers labels~$v$
and $v+\gamma_k$ (where~$v$ is the previously committed value), which is
always consistent with monotonicity.  These contrasting examples illustrate
that the tree obstruction captures a subtle interplay between the hypothesis
class structure, the loss geometry, and the domain geometry, going beyond
simple boundedness conditions.

\subsection{On the constructive nature of the learning rule}
\label{app:explicit-learner}

The learning rule of Section~\ref{subsec:realizable-learner} is ``explicit'' in
the following sense: given access to a measurable winning strategy $\sigma$
for $P_L$ in the Gale-Stewart game, the algorithm's steps (drive the game on
half the data, partition $\X$ using the terminal history, run \medoidnet\ per
cell) are fully concrete and data-dependent.

The existence (and universal measurability) of $\sigma$ itself is guaranteed by
\citet[Corollary~3.5]{bousquet_theory_arxiv} via a non-constructive
determinacy argument (their Theorem~B.1). This parallels the situation in
\citet{bousquet_theory_arxiv}, whose universal learner for $0$--$1$
classification also relies on a measurable winning strategy whose existence
follows from determinacy. As noted in their Section~3.4, for hypothesis
classes with countable ordinal Littlestone dimension, the strategy $\sigma$
can be described explicitly via transfinite induction on the ordinal
dimension. Any measurable $\H$ has at most countable ordinal Littlestone
dimension \citep[Lemma~B.7]{bousquet_theory_arxiv}, so in principle the
strategy admits a concrete (if complex) description for any class encountered
in practice.

\subsection{Measurability of the winning strategy}
\label{app:measurable-strategy}

The existence of a universally measurable winning strategy $\sigma$ for $P_L$
follows from \citet[Corollary~3.5]{bousquet_theory_arxiv}, which establishes
that under Polish space assumptions and measurability of the concept class
(in the sense of their Definition~3.3), any Gale-Stewart game with a finitely
decidable winning condition admits a universally measurable winning strategy
for the learner.

The underlying Theorem~B.1 of \citet{bousquet_theory_arxiv} requires the
learner's response sets $\{Y_t\}$ to be countable. In our game, $P_L$ selects
an index from $\{1,2\}$ (choosing one of the two proposed labels), so each
$Y_t=\{1,2\}$ is countable; the adversary's move spaces
$X_t=\X\times\Y\times\Y$ are Polish. The winning condition is finitely
decidable (the game ends when $P_A$ has no legal move), and coanalyticity
of the winning set $W$ follows by the same projection argument as in the proof
of \citet[Corollary~3.5]{bousquet_theory_arxiv}: the complement $W^c$ (play
continues forever) is analytic, since at each finite prefix the legality
condition ``$\exists\,h\in\H$ consistent with history'' is expressible as a
projection over the Polish parameter space $\Theta$ from
Definition~3.3. Hence Theorem~B.1 applies directly.

\subsection{On the compact-parameterization assumption}
\label{app:compact-parameterization}

In Theorem~\ref{thm:main-characterization} the characterization is stated in
terms of the \emph{finite-prefix} tree notion
(Definition~\ref{def:monotone-gamma-littlestone}), while the lower bound
(Theorem~\ref{thm:lower-bound}) requires every infinite path through the tree
to be realized by a single hypothesis
(Definition~\ref{def:realizable-monotone-gamma-littlestone}).
Lemma~\ref{lem:bridging} bridges this gap under the compact-parameterization
assumption.

%% ---- Proof ----
\subsubsection{Proof of Lemma~\ref{lem:bridging}}
\label{app:compact-proof}

\begin{proof}
Fix an infinite path $b=(b_1,b_2,\ldots)\in\{1,2\}^{\N}$ through the tree.
For each finite prefix of length $k$, finite-prefix realizability gives a
parameter $\theta_k\in\Theta$ such that $h(\theta_k, x_j) = y_{j,b_j}$
for all $j\le k$.  Define
\[
  \Theta_k(b) \;=\; \bigl\{\theta\in\Theta : h(\theta, x_j) = y_{j,b_j}
  \;\text{for all } j \le k\bigr\}.
\]
Each $\Theta_k(b)$ is non-empty (it contains $\theta_k$) and closed
(the preimage of the closed set $\{y_{j,b_j}\}$ under the map
$\theta\mapsto h(\theta,x_j)$, which is continuous in~$\theta$ by
assumption, intersected over finitely many~$j$).
Moreover $\Theta_1(b)\supseteq\Theta_2(b)\supseteq\cdots$ is a decreasing
chain of non-empty closed subsets of the compact space~$\Theta$.
Because the chain is decreasing, every finite sub-collection
$\Theta_{k_1}(b),\ldots,\Theta_{k_m}(b)$ satisfies
$\bigcap_{i=1}^{m}\Theta_{k_i}(b)=\Theta_{\max_i k_i}(b)\neq\emptyset$,
so the family $\{\Theta_k(b)\}_{k\ge 1}$ has the finite-intersection
property.  Since each $\Theta_k(b)$ is a closed subset of the compact
space~$\Theta$, compactness implies
$\bigcap_{k\ge 1}\Theta_k(b) \neq \emptyset$.
Any $\theta^*$ in this intersection satisfies
$h(\theta^*, x_j)=y_{j,b_j}$ for all~$j$, so $h(\theta^*,\cdot)\in\H$
realizes the entire infinite path~$b$.
\end{proof}

%% ---- Necessity ----
\subsubsection{Why the assumption is necessary}
\label{app:compact-necessity}

Without the compact-parameterization assumption, the two tree notions can
differ.  Consider the following example.

\smallskip
\emph{Setup.}
Let $\Theta = \mathbb{N}_0$ (the non-negative integers with the discrete
topology, which is Polish), $\X = \{x_k : k \in \mathbb{N}\}$ (countable,
discrete), $\Y = \mathbb{R}$ with $\ell(y,y')=|y-y'|$, and define
\[
  h(n, x_k) \;=\; (\text{$k$-th binary digit of $n$}) \cdot k,
\]
so that $\H = \{h_n : n \in \mathbb{N}_0\}$ where $h_n(\cdot) = h(n,\cdot)$.

\smallskip
\emph{A Definition~\ref{def:monotone-gamma-littlestone} tree exists.}
At depth~$k$, place instance~$x_k$ with labels $y_{k,1}=0$ and
$y_{k,2}=k$; the gap is $\ell(0,k)=k\to\infty$, so the gaps are
non-decreasing and divergent.
Every finite path $(b_1,\dots,b_K)\in\{1,2\}^K$ is realized by
$n = \sum_{k=1}^K \mathbf{1}[b_k=2]\cdot 2^{k-1}$
(set the $k$-th binary digit of~$n$ to~$1$ iff $b_k=2$).

\smallskip
\emph{Not every infinite path is realizable.}
The all-$2$ path, $b_k=2$ for all~$k$, requires $n$ whose \emph{every}
binary digit equals~$1$, i.e.\ $n = \sum_{k\ge 1} 2^{k-1} = \infty$.
No finite integer has this property, so no $h_n\in\H$ realizes this
infinite path.

\smallskip
\emph{Yet $\H$ is RSUBC.}
Each $h_n$ has only finitely many non-zero values (since $n$ has finitely
many binary digits equal to~$1$).  The memorize-and-predict-$0$
learner achieves risk~$\to 0$ for every strictly realizable
distribution (since $f^\star$ has finite support, every support point is
eventually observed).  In fact, the same learner also succeeds under
non-strict realizability; see Section~\ref{app:realizability-notions}.

\smallskip
This example shows that a
Definition~\ref{def:monotone-gamma-littlestone} tree can coexist with
RSUBC\@.  Hence the lower bound cannot be proved from
Definition~\ref{def:monotone-gamma-littlestone} alone; the gap between
the two definitions is genuine.
Note that $\Theta=\mathbb{N}_0$ is \emph{not compact} (it is discrete and
infinite), so the compact-parameterization assumption of
Lemma~\ref{lem:bridging} does not hold.

%% ---- Unlearnability is still possible ----
\subsubsection{The assumption does not rule out unlearnability}
\label{app:compact-unlearnability}

The compact-parameterization assumption does \emph{not} restrict $\Y$ to be
bounded, and unlearnability remains possible even when the assumption holds.
The label space~$\Y$ remains a general (possibly unbounded) Polish space,
and the label gaps along the tree can still diverge
($\gamma_k\to\infty$), preserving the unbounded-loss character of the
characterization.  What the assumption ensures is that, for each fixed
instance~$x$, the achievable label set
$\{h(\theta,x):\theta\in\Theta\}$ is compact in~$\Y$; however,
the diameter of this set may grow without bound as~$x$ varies.

The motivating counterexample of Section~\ref{sec:counterexample}
demonstrates this concretely.  Recall
$\H = \{f : \forall k\in\N,\; f(x)\in\{0,2^{2k+1}\}\text{ for }x\in I_k\}$,
where $I_k=(2^{-k},2^{-(k-1)})$.
Each function in~$\H$ is determined by a binary sequence
$\theta=(\theta_1,\theta_2,\ldots)\in\{0,1\}^{\mathbb{N}}$, where
$\theta_k$ selects whether $f$ maps $I_k$ to~$0$ or to~$2^{2k+1}$.  Thus
the natural parameter space is $\Theta=\{0,1\}^{\mathbb{N}}$, which is
compact (by Tychonoff's theorem, as a countable product of finite spaces)
and Polish (the product metric
$d(\theta,\theta')=\sum_k 2^{-k}|\theta_k-\theta'_k|$ is complete and
separable).  The evaluation map
$h(\theta,x)=\theta_k\cdot 2^{2k+1}$ for $x\in I_k$ is continuous in~$\theta$
(it depends only on the $k$-th coordinate, which is a continuous projection in
the product topology).

Yet the label gaps at depth~$k$ of the corresponding Littlestone tree are
$\gamma_k=2^{2k+1}\to\infty$, so the class admits an unbounded-gap tree
and is \emph{not} RSUBC by Theorem~\ref{thm:lower-bound}.  In other words,
the compact-parameterization assumption is compatible with the full range
of unlearnability phenomena captured by our characterization.

%% ---- Examples ----
\subsubsection{Examples of settings satisfying the assumption}
\label{app:compact-examples}

\begin{itemize}
\item \textbf{Finite or compact label spaces.}
  When $\Y$ is compact (e.g., binary classification with
  $\Y=\{0,1\}$, multi-class classification with finite~$\Y$, or bounded
  regression with $\Y=[a,b]$), the function space $\Y^\X$ is compact in
  the product topology by Tychonoff's theorem, and the classical
  compactness argument of \citet{bousquet_theory_arxiv} shows that the two
  tree notions coincide automatically.  Our compact-parameterization
  assumption is not needed in this case and is strictly weaker than
  compactness of~$\Y$.

\item \textbf{Parametric families with bounded parameters.}
  Neural networks with bounded weights, support vector machines with
  bounded kernels, and Lipschitz functions on compact domains all
  naturally have compact parameter spaces (e.g.,
  $\Theta = [-W,W]^d$) and continuous evaluation maps.

\item \textbf{The counterexample is pathological.}
  The separating example of
  Section~\ref{app:compact-necessity} uses $\Theta = \mathbb{N}_0$
  (discrete, non-compact) and a discontinuous evaluation map.  Classes
  parametrized by a countably infinite discrete space with no
  accumulation points are unusual in practical learning settings.
  In contrast, the compact-parameterization assumption permits
  $\Theta$ to be any compact Polish space, which includes all compact
  subsets of~$\mathbb{R}^d$, compact manifolds, and many
  infinite-dimensional spaces (e.g., compact subsets of function spaces
  in the compact-open topology).
\end{itemize}

%% ---- Realizability without compactness ----
\subsubsection{Realizability without compactness}
\label{app:realizability-notions}

This paper defines realizability as
$\inf_{h\in\H}\risk_{\bmu}(h)=0$ (non-strict realizability).
Without the compact-parameterization assumption, this does not guarantee
the existence of a hypothesis attaining zero risk, and two natural notions
of realizability can diverge:
\begin{enumerate}
\item \textbf{Strict realizability}:
  there exists $f^\star\in\H$ such that $Y=f^\star(X)$ almost surely,
  i.e.\ some hypothesis in $\H$ \emph{attains} zero risk.
\item \textbf{Non-strict realizability} (the definition used in this paper):
  $\inf_{h\in\H}\risk_{\bmu}(h)=0$,
  i.e.\ the Bayes risk is zero but no single hypothesis need attain it.
\end{enumerate}
Under the compact-parameterization assumption (compact~$\Theta$,
$h$ continuous in~$\theta$), the image
$\H=\{h(\theta,\cdot):\theta\in\Theta\}$ is compact in the product
topology on $\Y^\X$.
If $\inf_{h\in\H}\risk_{\bmu}(h)=0$, a risk-minimizing sequence
$\theta_n$ in the compact space $\Theta$ has a convergent subsequence
$\theta_{n_j}\to\theta^\star$, and continuity gives
$\risk_{\bmu}(h(\theta^\star,\cdot))=0$, so the infimum is attained and
the two definitions coincide.

Without compactness the infimum may not be attained.  Consider
$\X=\mathbb{N}$ (discrete), $\Y=\mathbb{N}_0$ with $\ell_1$ loss, and
$\H=\{h:\mathbb{N}\to\mathbb{N}_0 : \operatorname{supp}(h)\text{ is
finite}\}$.
Let $g:\mathbb{N}\to\mathbb{N}_0$ be \emph{any} function with infinite
support (e.g.\ $g(k)=k$), and let $\bmu$ be the distribution with
$X\sim\mu_X$ supported on all of~$\mathbb{N}$ (e.g.\
$\mu_X(\{k\})=c/k^3$ for a normalizing constant~$c$) and $Y=g(X)$.
Defining $h_N(k)=g(k)\mathbf{1}_{k\le N}$ gives $h_N\in\H$ with
$\risk_{\bmu}(h_N)=\sum_{k>N}\mu_X(\{k\})\,|g(k)|$.
For this~$\H$, non-strict realizability
($\inf_{h\in\H}\risk_{\bmu}(h)=0$) is in fact \emph{equivalent} to the
first-moment condition $\sum_k \mu_X(\{k\})\,|g(k)|<\infty$: convergence
of the series implies the tails vanish, while divergence forces every
$h\in\H$ (having finite support) to incur infinite risk.
When the first moment holds,
$\risk_{\bmu}(h_N)\to 0$, so
$\inf_{h\in\H}\risk_{\bmu}(h)=0$.  Yet $g\notin\H$ (infinite support),
so no $f^\star\in\H$ achieves zero risk: $\bmu$ is non-strictly realizable
but not strictly realizable.

Despite this gap, the memorize-and-predict-$0$ learner succeeds under
\emph{both} realizability notions.
Under strict realizability the argument is immediate ($f^\star$ has finite
support, so every support point is eventually observed).
Under non-strict realizability, the learner's risk is
$\sum_k \mu_X(\{k\})\,|g(k)|\,\mathbf{1}\{k\notin\{X_1,\dots,X_n\}\}$;
each atom with $\mu_X(\{k\})>0$ is eventually observed almost surely, and
the terms are dominated by the summable
$\mu_X(\{k\})\,|g(k)|$, so dominated convergence gives
$\risk_{\bmu}(A(S_n))\to 0$ almost surely.

Since this paper assumes the compact-parameterization condition
throughout, the two notions coincide for all results stated here,
and readers may treat them interchangeably.

%% ---- Open question ----
\subsubsection{Open question: removing compactness}
\label{app:compact-open-question}

Whether the characterization of
Theorem~\ref{thm:main-characterization} can be extended to general
Polish~$\Theta$ (without compactness) is an interesting open problem.

We know that Definition~\ref{def:monotone-gamma-littlestone}
(finite-prefix realizability) is \emph{not} the correct obstruction in
the general case.  The separating example of
Section~\ref{app:compact-necessity} demonstrates this: the class
$\H=\{h:\mathbb{N}\to\mathbb{N}_0:\operatorname{supp}(h)\text{ is
finite}\}$ admits a
Definition~\ref{def:monotone-gamma-littlestone} tree with
$\gamma_k=k\to\infty$, yet not every infinite branch is realizable
(the all-nonzero branch requires infinite support), and the class is
nevertheless RSUBC\@.
Thus the finite-prefix tree obstruction produces false positives without
the compact-parameterization assumption.
A complete characterization in the non-compact setting would likely require
a different tree notion; one possible direction is a
\emph{coin-realizable} variant, in which a randomly drawn infinite branch
is realizable with probability one, rather than requiring every
deterministic branch to be realizable.
Finding the minimal topological or measure-theoretic conditions (weaker
than compactness) under which the finite-intersection argument of
Lemma~\ref{lem:bridging} still holds remains open.

\subsection{Barriers to the agnostic extension}
\label{app:agnostic-discussion}

Our characterization addresses the \emph{realizable} setting: the
data-generating distribution $\bmu$ is assumed to be realizable
($\inf_{h\in\H}\risk_{\bmu}(h)=0$). A natural follow-up is whether the
characterization extends to
the \emph{agnostic} setting, where $\bmu$ may not be realizable.

The key obstacle lies in the upper-bound argument. The stabilization procedure
(Section~\ref{subsec:stabilization}) drives the Gale-Stewart game using sample
points $(X_i,Y_i)$ and crucially relies on the fact that
$Y_i=f^\star(X_i)\in V_k(X_i)$ (the true label is feasible given the current
history). In the agnostic setting, labels need not come from any $h\in\H$, so
sample points may not constitute legal moves in the game, and the
stabilization argument breaks down.

Moreover, the counterexample of Section~\ref{sec:counterexample} shows
that the natural distributional condition $R^\star<\infty$ is insufficient even
in the realizable setting. In the agnostic case, one would need to
simultaneously handle approximation error (the gap between $R^\star$ and $0$)
and the unbounded-scale estimation difficulty, likely requiring a
fundamentally different characterization. The agnostic counterpart of our
result is an interesting open problem (connected to the open question
originally posed by \citet{pmlr-v178-cohen22a}) left for future work.

\subsection{Counterexample: infinite risk on unseen intervals}
\label{app:counterexample-infinite-risk}

We include here the calculation deferred from Section~\ref{sec:counterexample}
showing that, conditional on any realized sample, the learned predictor suffers
infinite risk almost surely.

We now show that this forces infinite risk: conditional on any realized sample,
infinitely many unseen intervals incur arbitrarily large loss.
Define the (random) set of observed interval indices
\[
I(S_n):=\{k\in\N:\ \exists i\le n\ \text{with}\ X_i\in I_k\}.
\]
Condition on a realized sample \(S_n\). Then \(h_n=\A(S_n)\) is fixed, and for
every unseen interval \(k\notin I(S_n)\) the bit \(B_k\) remains an independent
fair coin under the conditional law (since no sample point fell in \(I_k\)).
Write the contribution of interval \(I_k\) to the risk as
\[
\begin{aligned}
R_k(B_k) &:= \int_{I_k} \ell\bigl(h_n(x), f_B(x)\bigr)\,dx \\
&= \begin{cases}
\int_{I_k} |h_n(x)|\,dx & (B_k=0),\\[2pt]
\int_{I_k} |h_n(x)-2^{2k+1}|\,dx & (B_k=1).
\end{cases}
\end{aligned}
\]
By the triangle inequality, for every \(x\in I_k\),
\(|h_n(x)|+|h_n(x)-2^{2k+1}|\ge 2^{2k+1}\). Integrating over \(I_k\) yields
\[
\begin{aligned}
\int_{I_k} |h_n(x)|\,dx\;+\;\int_{I_k} |h_n(x)-2^{2k+1}|\,dx
&\ge |I_k|\cdot 2^{2k+1} \\
&= 2^{-k}\cdot 2^{2k+1} \\
&= 2^{k+1}.
\end{aligned}
\]
Hence at least one of the two displayed integrals is at least \(2^k\), so
\(\P(R_k(B_k)\ge 2^k\mid S_n)\ge \tfrac12\). Since the events
\(\{R_k(B_k)\ge 2^k\}\) over \(k\notin I(S_n)\) are independent conditional on
\(S_n\), the second Borel-Cantelli lemma implies that almost surely (over the
bits \(B\) conditional on \(S_n\)) infinitely many unseen \(k\) satisfy
\(R_k(B_k)\ge 2^k\). Therefore,
\[
\begin{aligned}
\risk_{\bmu_B}(h_n)
&=\sum_{k\ge1} R_k(B_k) \ge \sum_{\substack{k\notin I(S_n)\\ R_k(B_k)\ge 2^k}} 2^k \\
&=\infty
\quad\text{almost surely (over \(B\) conditional on \(S_n\)).}
\end{aligned}
\]
Applying Fubini to the joint draw \((B,S_n)\), we obtain: for each fixed
\(n\), for \(\P\)-almost every \(B\) we have
\(\P_{S_n\sim\bmu_B^n}(\risk_{\bmu_B}(\A(S_n))=\infty)=1\). Finally, taking a
countable intersection over \(n\in\N\), there exists a single fixed
\(B^\star\) (equivalently, a single fixed \(f^\star=f_{B^\star}\in\H\)) such that for the
fixed realizable distribution \(\bmu:=\bmu_{B^\star}\),
\[
\risk_{\bmu}(\A(S_n))=\infty
\qquad\text{almost surely for all }n\in\N.
\]

\subsection{Proof of Theorem~\ref{thm:lower-bound}}
\label{app:lower-bound-proof}

\begin{proof}
Fix a realizable infinite non-decreasing-$\gamma_k$-Littlestone tree as in
Definition~\ref{def:realizable-monotone-gamma-littlestone}. Since
\(\gamma_k\to\infty\), for each \(m\ge 1\) define an index
\[
k_m := \min\{k\ge 1:\ \gamma_k \ge m^2\},
\]
which is finite and non-decreasing in \(m\) (and strictly increasing after
removing duplicates).
Let \(p_m := \frac{6}{\pi^2}\cdot \frac{1}{m^2}\), so that \(\sum_{m\ge 1} p_m=1\).
Now sample a random infinite root-to-leaf path by drawing i.i.d.\ fair bits
\((B_k)_{k\ge 1}\), \(B_k\in\{1,2\}\). For each outcome \(B\), let
\(\bmu_B\) be the distribution supported on the path nodes at depths \((k_m)\):
\[
\P_{\bmu_B}\!\bigl(X=x_{k_m}(B)\bigr)=p_m,
\]
\[
Y=y_{k_m,B_{k_m}}(B)\ \ \text{(deterministically)}.
\]
Here \(x_{k_m}(B)\) denotes the depth-\(k_m\) instance on the path determined by
\(B_1,\dots,B_{k_m-1}\), and \(y_{k_m,1}(B),y_{k_m,2}(B)\) are the two outgoing
edge labels at that node. By realizability of the tree, for every \(B\) there
exists \(f_B^\star\in\H\) realizing the entire path; in particular
\(Y=f_B^\star(X)\) almost surely under \(\bmu_B\), so \(\bmu_B\) is realizable.

Fix any learning algorithm \(\A\) and any \(n\). Draw \(B\) and then
\(S_n\sim\bmu_B^n\), and let \(h_n:=\A(S_n)\). Let \(M(S_n)\) be the maximum
index \(m\) such that the sample contains a point with \(X=x_{k_m}(B)\)
(take \(M(S_n)=0\) if none).
Then $M(S_n)<\infty$ deterministically, as it is the maximum of $n$ positive integers
(though $M(S_n)$ is random and not bounded by any fixed constant uniformly
over~$B$).

Consider any \(m>M(S_n)\). Conditional on \(S_n\) and the path history up to
depth \(k_m{-}1\), the node \(x_{k_m}(B)\) and its two labels
\(y_{k_m,1}(B),y_{k_m,2}(B)\) are fixed, while \(B_{k_m}\) is still a fair
coin. Thus, by the triangle inequality,
\[
\begin{aligned}
&\E\!\left[\ell\bigl(h_n(x_{k_m}(B)),\,Y\bigr)\ \middle|\ S_n,\ (B_j)_{j<k_m}\right] \\
&\quad = \tfrac12\,\ell\bigl(h_n(x_{k_m}(B)),y_{k_m,1}(B)\bigr) \\
      &\qquad+ \tfrac12\,\ell\bigl(h_n(x_{k_m}(B)),y_{k_m,2}(B)\bigr) \\
&\quad \ge \tfrac12\,\ell\bigl(y_{k_m,1}(B),y_{k_m,2}(B)\bigr) \\
&\quad \ge \tfrac{\gamma_{k_m}}{2} \\
&\quad \ge \tfrac{m^2}{2}.
\end{aligned}
\]

For each \(m>M(S_n)\), define
\[
\begin{aligned}
b_m^\mathrm{bad}
  := \arg\max_{b\in\{1,2\}}
    \ell\bigl(h_n(x_{k_m}(B)),\,y_{k_m,b}(B)\bigr)
\end{aligned}
\]
(break ties arbitrarily), and the event \(E_m := \{B_{k_m}=b_m^\mathrm{bad}\}\). Since \(B_{k_m}\) is a
fresh fair coin conditional on \(S_n\) and \((B_j)_{j<k_m}\), we have
\[
\P(E_m\mid S_n,\,(B_j)_{j<k_m})=\tfrac12,
\]
and on \(E_m\),
\[
\begin{aligned}
\ell\bigl(h_n(x_{k_m}(B)),\,Y\bigr)
  &= \ell\bigl(h_n(x_{k_m}(B)),\,y_{k_m,B_{k_m}}(B)\bigr) \\
  &\ge \tfrac12\,\ell\bigl(y_{k_m,1}(B),y_{k_m,2}(B)\bigr) \\
  &\ge \tfrac{m^2}{2}.
\end{aligned}
\]

Moreover, conditional on \(S_n\) and the full path history \((B_j)_{j\ge1}\) except
for the coordinates \(\{B_{k_m}:m>M(S_n)\}\), the bits \(\{B_{k_m}:m>M(S_n)\}\) are
independent fair coins. Hence, by (conditional) Borel-Cantelli,
\(\P(E_m\ \text{i.o.}\mid S_n)=1\). Therefore, almost surely,
\[
\begin{aligned}
\risk_{\bmu_B}(h_n)
  &\;=\;\sum_{m\ge1} p_m\,
    \ell\bigl(h_n(x_{k_m}(B)),\,y_{k_m,B_{k_m}}(B)\bigr) \\
  &\;\ge\; \sum_{m>M(S_n)} p_m\cdot \frac{m^2}{2}\,\1\{E_m\} \\
  &\;=\; \frac{3}{\pi^2}\sum_{m>M(S_n)} \1\{E_m\} \\
  &\;=\;\infty.
\end{aligned}
\]

In particular, for each fixed \(n\),
\(\P_{B,S_n}\!\bigl(\risk_{\bmu_B}(\A(S_n))=\infty\bigr)=1\), so by Fubini,
for \(\P_B\)-almost every \(B\) we have
\(\P_{S_n\sim\bmu_B^n}(\risk_{\bmu_B}(\A(S_n))=\infty)=1\).
Taking a countable intersection over \(n\in\N\) of these probability-one sets of
\(B\), we may fix a single \(B^\star\) such that
\(\risk_{\bmu}(\A(S_n))=\infty\) almost surely for all \(n\), where
\(\bmu:=\bmu_{B^\star}\). This completes the proof.
\end{proof}